\def\eqref#1{equation~\ref{#1}}
\def\1{\bm{1}}
\def\va{{\bm{a}}}
\DeclareMathAlphabet{\mathsfit}{\encodingdefault}{\sfdefault}{m}{sl}
\SetMathAlphabet{\mathsfit}{bold}{\encodingdefault}{\sfdefault}{bx}{n}
\theoremstyle{plain}
\newtheorem{theorem}{Theorem}[section]
\theoremstyle{definition}
\newtheorem{assumption}[theorem]{Assumption}
\theoremstyle{remark}
\icmltitlerunning{TGDPO: Harnessing Token-Level Reward Guidance for Enhancing Direct Preference Optimization}
\begin{document}

\twocolumn[
\icmltitle{TGDPO: Harnessing Token-Level Reward Guidance for Enhancing Direct Preference Optimization}



\icmlsetsymbol{equal}{*}

\begin{icmlauthorlist}
\icmlauthor{Mingkang Zhu}{cuhk}
\icmlauthor{Xi Chen}{hku}
\icmlauthor{Zhongdao Wang}{huawei}
\icmlauthor{Bei Yu}{cuhk}
\icmlauthor{Hengshuang Zhao}{hku}
\icmlauthor{Jiaya Jia}{smartmore,hkust}
\end{icmlauthorlist}

\icmlaffiliation{cuhk}{The Chinese University of Hong Kong}

\icmlaffiliation{hku}{The University of Hong Kong}
\icmlaffiliation{hkust}{The Hong Kong University of Science and Technology}
\icmlaffiliation{smartmore}{SmartMore}
\icmlaffiliation{huawei}{Huawei}

\icmlcorrespondingauthor{Mingkang Zhu}{mkzhu23@cse.cuhk.edu.hk}
\icmlcorrespondingauthor{Jiaya Jia}{jia@cse.ust.hk}

\icmlkeywords{Machine Learning, ICML}

\vskip 0.3in
]



\printAffiliationsAndNotice{}  

\begin{abstract}
Recent advancements in reinforcement learning from human feedback have shown that utilizing fine-grained token-level reward models can substantially enhance the performance of Proximal Policy Optimization (PPO) in aligning large language models. However, it is challenging to leverage such token-level reward as guidance for Direct Preference Optimization (DPO), since DPO is formulated as a sequence-level bandit problem. To address this challenge, this work decomposes the sequence-level PPO into a sequence of token-level proximal policy optimization problems and then frames the problem of token-level PPO with token-level reward guidance, from which closed-form optimal token-level policy and the corresponding token-level reward can be derived. Using the obtained reward and Bradley-Terry model, this work  establishes a framework of computable loss functions with token-level reward guidance for DPO, and proposes a practical reward guidance based on the induced DPO reward. This formulation enables different tokens to exhibit varying degrees of deviation from reference policy based on their respective rewards. Experiment results demonstrate that our method achieves substantial performance improvements over DPO, with win rate gains of up to 7.5 points on MT-Bench, 6.2 points on AlpacaEval 2, and 4.3 points on Arena-Hard. Code is available at https://github.com/dvlab-research/TGDPO.
\end{abstract}

\section{Introduction}
\label{intro}

Reinforcement Learning from Human Feedback (RLHF)
has become a crucial technique for aligning Large Language models (LLMs) with human preferences and intentions \cite{NEURIPS2022_b1efde53, ziegler2020finetuninglanguagemodelshuman}. This approach has demonstrated significant success in recent LLMs advancements \cite{openai2024gpt4technicalreport, geminiteam2024geminifamilyhighlycapable, grattafiori2024llama3herdmodels, gemmateam2024gemma2improvingopen}. In typical RLHF workflows, a reward model is first trained using human feedback, and then the Proximal Policy Optimization (PPO) algorithm \cite{schulman2017proximalpolicyoptimizationalgorithms} is employed to fine-tune the policy model. Typically, in these methods, a sequence-level reward is assigned to the last token of a sequence. However, this approach faces challenges, such as the sparse reward problem (i.e., delayed feedback), which leads to instability and sample inefficiency in PPO training \cite{Choshen2020On}. This issue is particularly pronounced in LLM training, where responses are often lengthy and generated at the token level \cite{yang2023preferencegrounded}. Recent research has suggested that leveraging dense token-level reward models \cite{yang2023preferencegrounded, yin2025segmentingtextlearningrewards, zhong2024dpomeetspporeinforced} can help alleviate these issues, improving PPO's performance in aligning LLMs with human preferences.

Recent developments in RLHF have centered around creating simpler and more efficient algorithms that eliminate the need for a separate reward model. A notable approach in this direction is Direct Preference Optimization (DPO) \cite{NEURIPS2023_a85b405e}. DPO reparameterizes the reward function in RLHF by directly using preference data to optimize the policy model, bypassing the traditionally required step of training a separate reward model. This reparameterization streamlines the alignment process, making DPO a popular algorithm for LLM alignment. While dense token-level reward guidance has been proved beneficial for PPO \cite{yang2023preferencegrounded, yin2025segmentingtextlearningrewards, zhong2024dpomeetspporeinforced}, its extension to DPO is nontrivial, as DPO is formulated as a sequence-level bandit problem. In this context, the reward is expressed through the policy being optimized, and integrating token-level reward guidance into this framework presents a significant challenge,  especially in eliminating the partition function from the loss function.

To fill this gap, we decompose the sequence-level proximal policy optimization  into a sequence of token-level proximal policy optimization problems and modify them to incorporate token-level reward guidance. We derive a closed-form optimal token-level policy and the corresponding token-level reward for the modified problem. Based on the obtained reward and Bradley-Terry model,  especially a new theoretical result for eliminating partition function, we propose a preference optimization algorithm  framework with token-level reward guidance for  DPO, which we refer to as TGDPO. Additionally, we introduce a practical token-level reward guidance based on the induced DPO reward.

Extensive experiments are conducted on three instruction following benchmarks: AlpacaEval 2 \cite{AlpacaEval}, MT-Bench \cite{mt-bench}, and Arena-Hard \cite{arenahard2024}. TGDPO consistently outperforms existing preference optimization algorithms, achieving improvements of up to 7.5 points on MT-Bench, 6.2 points on AlpacaEval 2, and 4.3 points on Arena-Hard compared to the best baseline method. We further demonstrate and analyze the unique advantages of TGDPO.  We empirically show that TGDPO achieves satisfactory policies upon loss convergence, which is not commonly observed in conventional preference optimization methods. TGDPO also enables control over convergence speed and is robust to variations in token-level rewards. These properties significantly enhance the efficiency and practicality of the algorithm. Our key contributions are outlined below:
\begin{itemize}
\setlength{\itemsep}{0pt}
\item We decompose the sequence-level PPO into a sequence of token-level proximal policy optimization problems via the upper-bounding approach and derive a closed-form optimal token-level policy for the modified problem, with which the corresponding reward can be represented along with the token-level reward guidance.

\item With the obtained reward, the Bradley-Terry model, and a new result for eliminating the partition function, we propose TGDPO, a  preference optimization algorithm framework with token-level reward guidance for  DPO. We further introduce a practical token-level reward guidance based on the induced DPO reward.

\item Extensive experiments demonstrate that our TGDPO improves win rates by up to 7.5 points on MT-Bench, 6.2 points on AlpacaEval 2, and 4.3 points on Arena-Hard compared to the best baseline.

\end{itemize}

\section{Related Work}
\label{related}

\textbf{Reinforcement Learning from Human Feedback.} Reinforcement learning from human feedback (RLHF) has been extensively applied for aligning LLMs with human preferences and values \cite{NEURIPS2022_b1efde53, ziegler2020finetuninglanguagemodelshuman}. The standard RLHF pipeline typically consists of two stages: reward modeling and policy optimization through reinforcement learning. Proximal Policy Optimization (PPO) with on-policy sampling \cite{schulman2017proximalpolicyoptimizationalgorithms} is commonly used for this purpose. However, challenges in effective reward modeling and tuning the PPO algorithm to achieve optimal performance have motivated alternative approaches that bypass the reward modeling step and focus on directly optimizing the policy. The direct preference optimization (DPO) algorithm \cite{NEURIPS2023_a85b405e} is a representative one. DPO explicitly represents the reward function with the optimal policy of the proximal policy optimization problem, thereby avoiding the need for a separate reward model and fine-tuning LLMs directly with human preference. DPO has proven to be both lightweight and stable, showing strong performance in a range of applications \cite{ivison2024unpacking, tian2023finetuninglanguagemodelsfactuality, miao2024aligningcodellmsdirectpreference}. Several variants of DPO have since been proposed, improving its performance. For instance, R-DPO \cite{park-etal-2024-disentangling} addresses DPO's tendency to exploit token length, while SimPO \cite{meng2024simpo} aims to better align the objective with the decoding formula and eliminate the need for a reference model. KTO \cite{ethayarajh2024ktomodelalignmentprospect} focuses on optimizing preferences using non-pairwise data. These preference optimization techniques, however, operate at the sequence level and do not shape the reward function of DPO from the token level. In contrast, our approach aims to leverage token-level rewards to guide preference optimization and better align LLMs. A recent work TDPO \cite{zeng2024tokenleveldpo} tries to provide a token-level understanding of DPO. It explains DPO using token-level Markov decision process and proposes to incorporate forward KL divergence to the DPO objective. However, like DPO, TDPO still  does not consider token-level reward guidance. Our TGDPO, on the other hand, explicitly incorporates token-level reward signals into the preference optimization framework.

\textbf{RLHF with Dense Token-Level Reward.}  Text generation of LLMs can be modeled as a Markov decision process. Sequence-level PPO treats the entire sequence as an action and assigns a reward at the sequence’s end \cite{schulman2017proximalpolicyoptimizationalgorithms}, which results in sparse feedback at the token level. This sparsity hinders the model’s ability to differentiate between preferred and dispreferred tokens within a sequence, leading to  training instability \cite{snell2023offline,xia-etal-2024-inverse}. To mitigate this issue, several techniques have been developed to generate dense token-level rewards, including learning from fine-grained human feedback \cite{wu2023finegrained}, fine-grained AI feedback \cite{ouyang2024tokenlevelproximalpolicyoptimization}, and grounding preferences at the token or segment level \cite{yang2023preferencegrounded, yin2025segmentingtextlearningrewards, zhong2024dpomeetspporeinforced}.  PPO leveraging such fine-grained reward models has shown significant performance improvements. However, extending token-level guidance to DPO is a challenge, as DPO’s reward function is explicitly expressed through the policy being optimized. Incorporating token-level reward guidance into the DPO framework requires overcoming substantial difficulties,  especially in eliminating the partition function from the loss function, which remains an open problem. 
More discussions on closely related work are presented in \cref{mrw}.

\section{Preliminary}
\label{prelim}

Given a human preference dataset $\mathcal{D}=\{ (x, y_w, y_l) \}$, where $x$ is a prompt, $y_w$ and $y_l$ are preferred and dispreferred responses respectively, in RLHF a sequence-level reward model $r_{\phi}(x, y)$ is first trained with the preference dataset for assigning higher reward to preferred response and lower reward to dispreferred one. With the trained reward model,  sequence-level Proximal Policy Optimization (PPO) solves the following problem to fine-tune LLMs:
\begin{align}
 & \max_{\pi_{{\bf\theta}}} \mathbb{E}_{x\sim\mathcal{D}, y\sim \pi_{\theta} ( \cdot | x )} \left[r_{\phi}(x, y)\right] - \beta \mathbb{D}_{\text{KL}} [\pi_{\theta} (\cdot|x) || \pi_{\text{ref}}(\cdot|x)] \nonumber \\ 
& = \max_{\pi_{{\bf\theta}}} \mathbb{E}_{x\sim\mathcal{D}, y\sim \pi_{\theta} ( \cdot | x )} \left[r_{\phi}(x, y) - \beta\log \frac{ \pi_{\theta} (y|x)}{\pi_{\text{ref}}(y|x)}\right], \label{FTR}
\end{align}
where $\mathbb{D}_{\text{KL}}[\cdot ]$ is the KL-divergence of two probability distributions, $\pi_{\theta}$ is the language model policy, $\pi_{\text{ref}}$ is the reference policy, and the positive parameter $\beta$ controls the deviation of $\pi_{\theta}$ from $\pi_{\text{ref}}$. \cref{FTR} can  be considered as assigning the reward to a sequence and is referred to as the sequence-level PPO problem in this work. It has the issue of  sparse reward (delayed feedback) that challenges traditional deep reinforcement learning \cite{NIPS2017_453fadbd}. To alleviate the issue, sequence-level PPO with token-level reward guidance is developed to fine-tune LLMs in a fine-grained fashion with dense token-wise rewards \cite{yang2023preferencegrounded,yin2025segmentingtextlearningrewards, zhong2024dpomeetspporeinforced}. 

\textbf{Sequence-Level PPO with Token-Level Reward Guidance.}  Text generation of an LLM can be modeled as a Markov Decision Process (MDP). Let  $s_t$ be the context for generating the token at time step $t\ge 0$, the generated token is denoted as $a_t\sim \pi_{\theta}(\cdot|s_t)$. For a prompt $x$ of the LLM,  $s_0 = x$ and $s_t = [x, a^{<t}]$, where $a^{<t} = [a_0, \ldots, a_{t-1}]$ are the previously generated tokens. The generated full text-sequence with $T$ tokens is denoted as $\va = [a_0, \ldots, a_{T-1}]$. A token-level reward,  for convenience it is also denoted by $r_{\phi}(s_t, a_t)$, is learned so that the reward sequence is dense and can guide the selection of token at any time step, which is called token-level reward guidance \cite{yang2023preferencegrounded, yin2025segmentingtextlearningrewards}. Typically, the problem of sequence-level proximal policy optimization with token-level reward guidance is \cite{yin2025segmentingtextlearningrewards}:
\begin{equation}\label{TPPO}
\begin{aligned}
\max_{\pi_{{\bf\theta}}} \mathbb{E}_{x\sim\mathcal{D}, y\sim \prod_{t=0}^{T-1}\pi_{\theta} (a_t|s_t)} & \left[\sum_{t=0}^{T-1}r_{\phi}(s_t, a_t) - \right. \\
& \left. \beta\log \frac{ \pi_{\theta} (y|x)}{\pi_{\text{ref}}(y|x)}\right],
\end{aligned}
\end{equation} 
where $x$ is a prompt, $s_t$ and $a_t$ are the state and action defined previously, $y=[a_0, \ldots, a_{T-1}]$ is the response generated by $\pi_{\theta}$ from the given prompt $x$. Classically, the sequence-level reward function $r_{\phi}(x, y)$ can be set as  $r_{\phi}(x, y)= \sum_{t=0}^{T-1}r_{\phi}(s_t, a_t)$ \cite{yang2023preferencegrounded}.

\textbf{Direct Preference Optimization.} 
Direct preference optimization \cite{NEURIPS2023_a85b405e} bypasses learning a reward model and aligns directly an LLM to human preference. DPO \cite{NEURIPS2023_a85b405e} expresses the sequence-level reward function explicitly with  the optimal policy of \cref{FTR} as:
\begin{equation}
\label{reward}
    r_{\phi}(x, y)=\beta\log \frac{\pi_{\theta}(y|x)}{\pi_{\text{ref}}(y|x)} + \beta\log Z(x),
\end{equation}
where $Z(x)$ is the partition function and $\beta$ is a positive constant. By adopting the Bradley-Terry preference model \cite{bt52} 
\begin{equation}
\label{bt}
\Pr(y_w \succ  y_l | x) = \frac{\exp{(r_{\phi}(x, y_w))}}{\exp{(r_{\phi}(x, y_w))} + \exp{(r_{\phi}(x, y_l))}}
\end{equation}
for specifying human preference distribution,
 DPO obtains the following loss function:
\begin{align}
\label{DPO}
 \mathcal{L}_{\text{DPO}}(\pi_{\theta})
 =  -  \mathbb{E}_{  (x, y_w, y_l) \sim \mathcal{D}} & \left[\log\sigma \left(\beta\log \frac{\pi_{\theta}(y_w|x)}{\pi_{\text{ref}}(y_w|x)} \right.\right. \nonumber \\
  & \left.\left.  
  - \beta\log \frac{\pi_{\theta}(y_l|x)}{\pi_{\text{ref}}(y_l|x)}\right) \right],
\end{align}
which is obtained by substituting \cref{reward} into \cref{bt}, where $\sigma$ is the sigmoid function. DPO minimizes \cref{DPO} with respect to the policy $\pi_{\theta}$ to directly fine-tune the LLM  with the preference dataset at the sequence level.

\section{Methodology}
\label{method}
Direct preference optimization expresses the reward function explicitly with the optimal policy of the sequence-level proximal policy optimization problem. However, incorporating existing token-level rewards explicitly into DPO to guide fine-tuning is an unresolved problem. To derive a form of DPO with token-level reward guidance, this section first gives the problem of token-level PPO  in \cref{TLPPO} from the sequence-level PPO in \cref{TPPO}. The token-level PPO  problem is further modified to incorporate token-level reward guidance in \cref{DPOTLR}, the closed-form optimal policy is derived, and the corresponding token-level reward with guidance is obtained. Then with the  Bradley-Terry model, we propose the direct preference optimization with token-level reward guidance in \cref{TRGDPO}.

\subsection{Token-Level PPO}
\label{TLPPO}

Note that $y=[a_0, \ldots, a_{T-1}]$ is the response generated by $\pi_{\theta}$ from the given prompt $x$. Using the notations of state and action in \cref{prelim}, we can get
\begin{equation*}
\label{pi}
\begin{aligned}
&\pi_{\theta}(y|x) = \pi_{\theta}([a_0, \ldots, a_{T-1}]|x)  = \prod_{t=0}^{T-1} \pi_{\theta}(a_t | s_t); \\
& \pi_{\text{ref}}(y|x) = \pi_{\text{ref}}([a_0, \ldots, a_{T-1}]|x)  = \prod_{t=0}^{T-1} \pi_{\text{ref}}(a_t | s_t).
\end{aligned}
\end{equation*} 

Thus, the objective function in \cref{TPPO} can be decomposed into the token level as:

\begin{equation}
\label{decomp}
\begin{aligned}
&\sum_{t=0}^{T-1}r_{\phi}(s_t, a_t) - \beta\log \frac{ \pi_{\theta} (y|x)}{\pi_{\text{ref}}(y|x)} \\
& = \sum_{t=0}^{T-1} \left(r_{\phi}(s_t, a_t) - \beta\log \frac{ \pi_{\theta} (a_t|s_t)}{\pi_{\text{ref}}(a_t|s_t)} \right). 
\end{aligned}
\end{equation}

Moreover, according to the MDP for language model (\cref{prelim}), $y\sim \prod_{t=0}^{T-1}\pi_{\theta} (a_t|s_t)$ in \cref{TPPO} is equivalent to $y\sim \pi_{\theta} ( \cdot | x )$, which is further equivalent to $s_0=x\sim\mathcal{D}$, $a_t\sim \pi_{\theta}(\cdot| s_t)$, $t=0, 1, \ldots, T-1$. Then by \cref{decomp}, the problem of sequence-level PPO  with token-level reward guidance in \cref{TPPO} becomes

\begin{align}
& \max_{\pi_{{\bf\theta}}} \mathbb{E}_{x\sim\mathcal{D}, y\sim \pi_{\theta} ( \cdot|x) } \left[\sum_{t=0}^{T-1} (r_{\phi}(s_t, a_t) -  \beta\log \frac{ \pi_{\theta} (a_t|s_t)}{\pi_{\text{ref}}(a_t|s_t)})\right] \nonumber \\
& = \max_{\pi_{{\bf\theta}}} \mathbb{E}_{s_0 \sim\mathcal{D}, a_t\sim \pi_{\theta}(\cdot| s_t), t=0, 1, \ldots, T-1} \left[\sum_{t=0}^{T-1} (r_{\phi}(s_t, a_t)  \right. \nonumber\\
& \qquad \qquad \qquad \left.   - \beta\log \frac{ \pi_{\theta} (a_t|s_t)}{\pi_{\text{ref}}(a_t|s_t)})\right]. \label{equivppo}
\end{align}

Based on \cref{equivppo}, we can show that:
\begin{theorem}
\label{thm1}
The maximum value of the sequence-level proximal policy optimization  in \cref{TPPO} is upper bounded by the summation from $t=0, 1, \ldots, T-1$ of the maximum value of the problem:
\begin{align} \label{tlppo}
& \max_{\pi_{{\bf\theta}}} \mathbb{E}_{s_t\sim\mathcal{D}_t, a_t\sim \pi_{\theta} (\cdot|s_t) } \left[r_{\phi}(s_t, a_t) - \beta\log \frac{ \pi_{\theta} (a_t|s_t)}{\pi_{\text{ref}}(a_t|s_t)}\right]  
\end{align}
where $s_t\sim\mathcal{D}_t$ denotes that $s_0=x\sim\mathcal{D}$ and $a_p\sim \pi_{\theta}(\cdot| s_p)$, $p=0, 1, \ldots, t-1$. 
\end{theorem}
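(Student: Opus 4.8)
The plan is to prove the inequality by exploiting the tower property of expectation together with the fact that, for each fixed time step $t$, maximizing the inner single-step objective over $\pi_\theta(\cdot|s_t)$ is at least as good as whatever value is induced by the joint-sequence maximizer. Concretely, starting from the right-hand side of \cref{equivppo}, I would use linearity of expectation to write the objective of the sequence-level PPO problem as $\sum_{t=0}^{T-1} \mathbb{E}\!\left[r_{\phi}(s_t,a_t) - \beta\log\frac{\pi_\theta(a_t|s_t)}{\pi_{\text{ref}}(a_t|s_t)}\right]$, where the expectation in the $t$-th summand is over $s_0=x\sim\mathcal{D}$, $a_p\sim\pi_\theta(\cdot|s_p)$ for $p=0,\dots,t$; note that only $a_0,\dots,a_t$ (equivalently $s_0,\dots,s_t,a_t$) actually enter the $t$-th term.

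The key step is to bound each summand individually. Fix $t$. For any policy $\pi_\theta$, condition on $s_t$ (whose law, call it $\mathcal{D}_t$, is the marginal induced by $s_0\sim\mathcal{D}$ and the roll-out $a_p\sim\pi_\theta(\cdot|s_p)$, $p<t$), and apply the tower property:
\begin{equation*}
\mathbb{E}\!\left[r_{\phi}(s_t,a_t) - \beta\log\tfrac{\pi_\theta(a_t|s_t)}{\pi_{\text{ref}}(a_t|s_t)}\right] = \mathbb{E}_{s_t\sim\mathcal{D}_t}\!\left[\mathbb{E}_{a_t\sim\pi_\theta(\cdot|s_t)}\!\left[r_{\phi}(s_t,a_t) - \beta\log\tfrac{\pi_\theta(a_t|s_t)}{\pi_{\text{ref}}(a_t|s_t)}\,\middle|\,s_t\right]\right].
\end{equation*}
The inner conditional expectation is, for each realized $s_t$, exactly the single-step objective appearing in \cref{tlppo} evaluated at the conditional distribution $\pi_\theta(\cdot|s_t)$, so it is $\le \max_{\pi} \mathbb{E}_{a\sim\pi(\cdot|s_t)}[r_\phi(s_t,a) - \beta\log\frac{\pi(a|s_t)}{\pi_{\text{ref}}(a|s_t)}]$. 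Taking the outer expectation over $s_t\sim\mathcal{D}_t$ then gives that this summand is at most $\max_{\pi_\theta}\mathbb{E}_{s_t\sim\mathcal{D}_t,\,a_t\sim\pi_\theta(\cdot|s_t)}[r_\phi(s_t,a_t)-\beta\log\frac{\pi_\theta(a_t|s_t)}{\pi_{\text{ref}}(a_t|s_t)}]$, i.e. the maximum value of \cref{tlppo}. Summing over $t=0,\dots,T-1$ and then taking the maximum over $\pi_\theta$ on the left yields the claimed bound.

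There is a subtlety worth being careful about: the law $\mathcal{D}_t$ of $s_t$ depends on $\pi_\theta$ through the roll-out, so the "max over $\pi$" on the inside and the "$\mathbb{E}_{s_t\sim\mathcal{D}_t}$" on the outside are not a priori decoupled. The clean way around this is to note that the bound in the previous paragraph holds for the \emph{same} fixed $\pi_\theta$ on both sides of each summand's inequality — we only replace the conditional action distribution by the pointwise optimal one, which can only increase the value — and the upper bound $\max_\pi \mathbb{E}_{a\sim\pi(\cdot|s_t)}[\cdots]$ is a function of $s_t$ alone, not of the roll-out policy. After passing to this pointwise-optimal inner policy, the outer average over $s_t$ is dominated by a further supremum over all roll-out laws, which is subsumed by the $\max_{\pi_\theta}$ in the definition of the optimal value of \cref{tlppo}. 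I expect this decoupling argument — making precise that the per-step maximization is unconstrained and hence the time-$t$ optimal value is an upper bound uniformly over how $s_t$ was reached — to be the main obstacle; everything else is linearity of expectation and the tower property.
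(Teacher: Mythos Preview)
Your proposal is correct, but it takes a more circuitous route than the paper. After the same linearity-of-expectation step you describe, the paper simply applies the elementary inequality $\max_{\pi_\theta}\sum_t f_t(\pi_\theta)\le\sum_t\max_{\pi_\theta}f_t(\pi_\theta)$ directly to the summands $f_t(\pi_\theta)=\mathbb{E}_{s_0\sim\mathcal{D},\,a_p\sim\pi_\theta}\bigl[r_\phi(s_t,a_t)-\beta\log\tfrac{\pi_\theta(a_t|s_t)}{\pi_{\text{ref}}(a_t|s_t)}\bigr]$, and then observes that each $f_t$ depends only on the marginal of $(s_t,a_t)$, which is exactly the expectation in \cref{tlppo}. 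Because every $\max_{\pi_\theta}$ on the right is already over the \emph{full} policy---governing both the roll-out that produces $\mathcal{D}_t$ and the action at time $t$---the ``subtlety'' you flag never arises in the paper's argument.

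Your tower-property detour through the pointwise bound $g(s_t)=\max_\pi\mathbb{E}_{a\sim\pi(\cdot|s_t)}[\cdots]$ is what manufactures that subtlety. Your resolution---that the supremum over roll-out laws is ``subsumed by'' the $\max_{\pi_\theta}$ in \cref{tlppo}---is correct but not automatic: a priori the decoupled maximum (separate roll-out and action policies) could exceed the coupled one. What saves you is that states at different time steps are distinguishable (by length), so $\pi_\theta(\cdot|s_t)$ can be set to the pointwise-optimal action distribution independently of how $\pi_\theta$ behaves on states $s_0,\dots,s_{t-1}$; hence the coupled and decoupled maxima coincide. If you keep your route, make this step explicit. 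A cleaner fix is to drop the pointwise inner maximization entirely and note that for each fixed $\pi_\theta$ the $t$-th summand is trivially at most the optimal value of \cref{tlppo} (since that very $\pi_\theta$ is a feasible point in the max)---which is precisely the paper's one-line move.
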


The proof of \cref{thm1} is given in  \cref{appe2}. 

\cref{tlppo} is the problem of token-level PPO at time step $t$, which optimizes the policy for action $a_t$ given the state $s_t$. \cref{thm1} suggests that, the sequence-level proximal policy optimization in \cref{TPPO} can be upper-bounded with a sequence of token-level PPOs in \cref{tlppo}. However, it is not easy to solve the problem since $s_t\sim\mathcal{D}_t$ is dependent on the policy $\pi_{\theta}$ to be optimized (see \cref{FTR} for a comparison, where the distribution $\mathcal{D}$ is independent of the policy $\pi_{\theta}$ to be optimized).

\subsection{Modified Token-Level PPO with Reward Guidance and Optimal Policy}
\label{DPOTLR}
Given  win and lose responses  $y_w = (a_0^w, \dots, $ $a_{T_w-1}^w)$ and    $y_l = (a_0^l, \dots, a_{T_l-1}^l)$,  \citet{rafailov2024rqlanguagemodel} expressed the per-instance loss of DPO \cite{NEURIPS2023_a85b405e}  in the token-level as:
\begin{equation*}
\begin{aligned}
&\Pr(y_w  \succ y_l) \\
&= \sigma \left( \sum_{t=0}^{T_w-1} \beta \log \frac{\pi_{\theta}(a_t^w | s_t^w)}{\pi_{\text{ref}}(a_t^w | s_t^w)} - \sum_{t=0}^{T_l-1} \beta \log \frac{\pi_{\theta}(a_t^l | s_t^l)}{\pi_{\text{ref}}(a_t^l | s_t^l)} \right).
\end{aligned}
\end{equation*}
Assuming access to a token-level reward $\hat r(s_t, a_t)$, since the token-level reward $\hat r(s_t, a_t)$ may imply whether the action $a_t$ is preferred or dispreferred in the state $s_t$, this work aims to replace $\beta$ in the above equation with $\beta f(\hat r(s_t, a_t))$, a function of 
the token-level reward $\hat r(s_t, a_t)$, 
to guide the DPO.

Following DPO \cite{NEURIPS2023_a85b405e}, we derive this form of loss function from the token-level proximal policy optimization in \cref{tlppo} by incorporating the token-level reward  guidance $f(\hat r(s_t, a_t))$. First, similar to \cite{zeng2024tokenleveldpo, yang2024denserewardviewaligning},  we relax $s_t\sim\mathcal{D}_t$ to $s_t\sim\mathcal{D}$ and
make \cref{tlppo}  solvable as
\begin{align} \label{ntlppo}
& \max_{\pi_{{\bf\theta}}} \mathbb{E}_{s_t\sim\mathcal{D}, a_t\sim \pi_{\theta} (\cdot|s_t) } \left[r_{\phi}(s_t, a_t) - \beta\log \frac{ \pi_{\theta} (a_t|s_t)}{\pi_{\text{ref}}(a_t|s_t)}\right].  
\end{align}
Next, we manage to incorporate token-level reward guidance $f(\hat r(s_t, a_t))$ into this formulation, and represent the ground-truth unknown reward function $r_{\phi}(s_t, a_t)$ with the optimal policy of this equation. The obtained ground-truth reward $r_{\phi}(s_t, a_t)$ is subsequently leveraged to construct our DPO's loss function under the Bradley-Terry preference model.

Directly replacing $\beta$ in \cref{ntlppo} with $\beta f(\hat r(s_t, a_t))$ might not make the problem easy to solve. To address this issue, by noting that $\beta$ is a positive  constant, \cref{ntlppo} is equivalent to
\begin{align}\label{tlppo1}
\max_{\pi_{{\bf\theta}}} \mathbb{E}_{s_t\sim\mathcal{D}, a_t\sim \pi_{\theta} (\cdot|s_t) } \left[\frac {r_{\phi}(s_t, a_t)}{\beta} - \log \frac{ \pi_{\theta} (a_t|s_t)}{\pi_{\text{ref}}(a_t|s_t)}\right]. 
\end{align}
Then, we make the following \cref{assump} for incorporating token-level reward guidance $f(\hat r(s_t, a_t))$ explicitly into \cref{tlppo1}.

\begin{assumption}\label{assump}
Suppose we have an existing reward model $\hat{r}(\cdot)$, which can generate a dense token-level reward sequence $\hat{r}(s_t, a_t)$, $t=0, 1, \ldots, T-1$. Moreover, suppose $f(u)$ is a positive univariate function of $u$. 
\end{assumption}

It was shown in   \citet{rafailov2024rqlanguagemodel} under the definition of equivalent state-action reward class and invariant re-parameterization that, DPO implicitly learns a token-level reward $\hat r(s_t, a_t)$ of the form $ \beta\log \frac{ \pi_{\theta} (a_t|s_t)}{\pi_{\text{ref}}(a_t|s_t)}$, and the total reward
$ \hat r(x, y)= \sum_{t=0}^{T-1} \hat r(s_t, a_t)$.
Hence \cref{assump} is feasible.

\textbf{Modified Token-Level PPO.}
With \cref{assump}, we propose to adopt the  token-level reward $\hat{r}(s_t, a_t)$ to guide  token-level PPO. First, the parameter $\beta$ in \cref{tlppo1} is replaced with
 $\beta f(\hat{r}(s_t, a_t))$ and we obtain the modified problem of token-level PPO with token-level reward guidance as follows:
\begin{align}
\label{tlppo2}
\max_{\pi_{{\bf\theta}}} \mathbb{E}_{s_t\sim {\mathcal{D}}, a_t\sim \pi_{\theta} (\cdot|s_t) } \left[\frac {r_{\phi}(s_t, a_t)}{\beta f(\hat{r}(s_t, a_t))} - \log \frac{ \pi_{\theta} (a_t|s_t)}{\pi_{\text{ref}}(a_t|s_t)}\right], 
\end{align}
where $f(\hat{r}(s_t, a_t))$ with the  token-level reward $\hat{r}(s_t, a_t)$ is adopted to modify the ground-truth unknown reward function $r_{\phi}(s_t, a_t)$.

Thus similar to \cite{NEURIPS2023_a85b405e}, the optimal policy for the action $a_t$ at time step $t$ of the modified  token-level proximal policy optimization in \cref{tlppo2} can be derived as the following \cref{thm3}. 

\begin{theorem}\label{thm3}
The optimal policy $\pi_{\theta_t} (a_t|s_t)$ for the action $a_t$ at time step $t$ of the modified  token-level proximal policy optimization in \cref{tlppo2} is
\begin{align*}
\pi_{\theta_t} (a_t|s_t) = \frac {\pi_{\text{ref}}(a_t|s_t)
\exp{\left(\frac {r_{\phi}(s_t, a_t)}{\beta f(\hat{r}(s_t, a_t))}\right)}}{Z(s_t)}, 
\end{align*}
where 
$Z(s_t)=\mathbb{E}_{a_t\sim\pi_{\text{ref}(\cdot| s_t)}}\left[ \exp{\left(\frac {r_{\phi}(s_t, a_t)}{\beta f(\hat{r}(s_t, a_t))}\right)}\right]$ is the partition function, and $s_t\sim \mathcal{D}$ does not depend on $\pi_{\theta_t}$. 
Moreover, the ground-truth unknown token-level reward can be represented with the optimal policy $\pi_{\theta_t} (a_t|s_t)$  as: 
\begin{align}
\label{optr}
\frac{r_{\phi}(s_t, a_t)}{ f(\hat{r}(s_t, a_t)) } =  \beta \log\frac {\pi_{\theta_t} (a_t|s_t)}{\pi_{\text{ref}}(a_t|s_t)} + \beta  \log Z(s_t).
\end{align}
\end{theorem}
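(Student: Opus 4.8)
The plan is to mimic the standard DPO derivation from \citet{NEURIPS2023_a85b405e}, treating \cref{tlppo2} as an ordinary KL-regularized reward-maximization problem in which the ``reward'' is $r_\phi(s_t,a_t)/f(\hat r(s_t,a_t))$ and the regularization coefficient is fixed at $1$. First I would fix $s_t$ (recall that under the relaxation $s_t\sim\mathcal{D}$, the sampling distribution of $s_t$ no longer depends on $\pi_{\theta_t}$), and rewrite the inner objective for that fixed state. The key algebraic step is to complete the log-ratio into a single KL divergence: define
\begin{equation*}
\pi^\star(a_t|s_t) := \frac{1}{Z(s_t)}\,\pi_{\text{ref}}(a_t|s_t)\exp\!\left(\frac{r_\phi(s_t,a_t)}{\beta f(\hat r(s_t,a_t))}\right),
\end{equation*}
with $Z(s_t)=\mathbb{E}_{a_t\sim\pi_{\text{ref}}(\cdot|s_t)}\big[\exp(r_\phi(s_t,a_t)/(\beta f(\hat r(s_t,a_t))))\big]$ chosen precisely so that $\pi^\star(\cdot|s_t)$ is a valid probability distribution (here \cref{assump} matters: $f>0$ keeps the exponent finite and the normalizer positive). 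Then I would show that for any policy $\pi_{\theta_t}$,
\begin{equation*}
\mathbb{E}_{a_t\sim\pi_{\theta_t}}\!\left[\frac{r_\phi(s_t,a_t)}{\beta f(\hat r(s_t,a_t))}-\log\frac{\pi_{\theta_t}(a_t|s_t)}{\pi_{\text{ref}}(a_t|s_t)}\right] = \log Z(s_t) - \mathbb{D}_{\text{KL}}\big[\pi_{\theta_t}(\cdot|s_t)\,\|\,\pi^\star(\cdot|s_t)\big].
\end{equation*}

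Once that identity is in place, the conclusion is immediate from the non-negativity of KL divergence together with the fact that $\log Z(s_t)$ does not depend on $\pi_{\theta_t}$: the objective is maximized iff $\mathbb{D}_{\text{KL}}[\pi_{\theta_t}\|\pi^\star]=0$, i.e., iff $\pi_{\theta_t}(a_t|s_t)=\pi^\star(a_t|s_t)$ pointwise, which is exactly the claimed closed form $\pi_{\theta_t}(a_t|s_t)=\pi_{\text{ref}}(a_t|s_t)\exp(r_\phi/(\beta f))/Z(s_t)$. For the reward-representation formula \cref{optr}, I would simply take $\log$ of both sides of the optimal-policy expression: $\log\pi_{\theta_t}(a_t|s_t)=\log\pi_{\text{ref}}(a_t|s_t)+\frac{r_\phi(s_t,a_t)}{\beta f(\hat r(s_t,a_t))}-\log Z(s_t)$, then rearrange to isolate $r_\phi(s_t,a_t)/f(\hat r(s_t,a_t))$ and multiply through by $\beta$, giving $\frac{r_\phi(s_t,a_t)}{f(\hat r(s_t,a_t))}=\beta\log\frac{\pi_{\theta_t}(a_t|s_t)}{\pi_{\text{ref}}(a_t|s_t)}+\beta\log Z(s_t)$.

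The only genuinely delicate points — the ``main obstacle,'' such as it is — are the bookkeeping ones rather than anything deep: verifying that $Z(s_t)$ is well-defined and finite (this is where I would invoke \cref{assump} that $f$ is positive, plus the implicit assumption that $\pi_{\text{ref}}$ has the same support as $\pi_\theta$ so the ratios and the expectation make sense), and being careful that the maximization is genuinely over distributions on the token vocabulary so that the KL-based argument applies conditionally on each $s_t$ and hence for the whole expectation over $s_t\sim\mathcal{D}$. I would also remark that the argument is formally identical to the DPO derivation with the substitution $\beta\mapsto\beta f(\hat r(s_t,a_t))$ and the reward $r_\phi$ left intact — the reason for writing \cref{tlppo2} in the rescaled form \cref{tlppo1} is exactly to make this substitution act on a clean coefficient — so after establishing the completing-the-KL identity the rest is routine.
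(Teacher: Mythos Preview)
Your proposal is correct and matches the paper's own proof essentially step for step: the paper also rewrites the objective in \cref{tlppo2} as $\mathbb{E}_{s_t\sim\mathcal{D}}\big[-\mathbb{D}_{\text{KL}}[\pi_\theta(\cdot|s_t)\,\|\,\pi^\star(\cdot|s_t)]+\log Z(s_t)\big]$ via the same ``complete the KL'' algebra, observes that $Z(s_t)$ is policy-independent so the maximizer is $\pi_{\theta_t}=\pi^\star$, and then obtains \cref{optr} by reorganizing the optimal-policy expression. Your extra remarks on invoking \cref{assump} and the $s_t\sim\mathcal{D}$ independence are a touch more explicit than the paper, but the argument is the same.
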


The proof of \cref{thm3} is provided in  \cref{appe3}.

\textbf{Modified Token-Level Reward.} By \cref{optr}, we have the token-level reward function
\begin{align}
\label{optr1}
r_{\phi}(s_t, a_t) =  & \beta f(\hat{r}(s_t, a_t))\log\frac {\pi_{\theta_t} (a_t|s_t)}{\pi_{\text{ref}}(a_t|s_t)} + \nonumber \\
& \qquad \beta f(\hat{r}(s_t, a_t)) \log Z(s_t),
\end{align}
where $f(\hat{r}(s_t, a_t))$ satisfies \cref{assump},  $\beta$ is a constant, $s_t\sim \mathcal{D}$ does not depend on $\pi_{\theta_t}$, $t=0, 1, \dots, T-1$.

Without loss of generality, suppose that trajectories generated by LLMs are bounded by a finite number of time steps, or tokens. Then, since LLMs are  over-parameterized, we may assume without loss of generality that, there exists $\theta$  such that $ \pi_{\theta} (a_t|s_t) = \pi_{\theta_t} (a_t|s_t)$, $t=0, 1, \dots, T-1$. Thus, with the notations of the prompt $x$ and the generated sequence $y$, \cref{optr1} can be rewritten in the form
\begin{align}
\label{optr11}
r_{\phi}([x, y^{<t}], y^t) =  & \beta f(\hat{r}([x, y^{<t}], y^t))\log\frac {\pi_{\theta} (y^t|[x, y^{<t}])}{\pi_{\text{ref}}(y^t|[x, y^{<t}])}  \nonumber \\
& + \beta f(\hat{r}([x, y^{<t}], y^t)) \log Z([x, y^{<t}])
\end{align}
for all time-step $t$, where the last term with the partition function does not depend on $\pi_{\theta}$, according to \cref{thm3}.

\subsection{ Direct Preference Optimization with Token-Level  Reward Guidance}
\label{TRGDPO}

For the proximal policy optimization  with token-level reward guidance in \cref{tlppo2},
\cref{DPOTLR} has represented the ground-truth unknown  token-level reward $r_{\phi}(s_t, a_t)$ explicitly in \cref{optr11}.  Subsequently, the total reward $r_{\phi}(x, y)$ for the prompt $x$ and its response $y$ can be expressed as:
\begin{align}
\label{toptr11}
r_{\phi}(x, y) =  & \sum_{t=0}^{T}\beta f(\hat{r}([x, y^{<t}], y^t))\log\frac {\pi_{\theta} (y^t|[x, y^{<t}])}{\pi_{\text{ref}}(y^t|[x, y^{<t}])}  \nonumber \\
& + \sum_{t=0}^{T}\beta f(\hat{r}([x, y^{<t}], y^t)) \log Z([x, y^{<t}]),
\end{align}
where the last term with the partition function does not depend on $\pi_{\theta}$.

Next, we derive the loss function with token-level reward guidance for direct preference optimization, as we set the target at the beginning of \cref{DPOTLR}. Given a human preference dataset $\mathcal{D}=\{ (x, y_w, y_l) \}$, where $x$ is a prompt, $y_w$ and $y_l$ are preferred and dispreferred responses respectively, we adopt the reward function in \cref{toptr11} and the Bradley-Terry preference model in \cref{bt} for specifying human preference. To this aim, we choose different shaping functions $f_w(\cdot)$ and $f_l(\cdot)$ for win and lose responses respectively, both of them satisfy the condition in \cref{assump}. Then by substituting \cref{toptr11} into \cref{bt}, we can get the per-instance loss detailed as follows.

\textbf{Bradley-Terry Model with Token-Level Reward Guidance.}
From \cref{toptr11}, for convenience we let   
\begin{align}
& \varphi(\pi_{\theta}, f, \hat{r}; x, y_w, y_l) \nonumber\\
& = \sum_{t=0}^{T_w-1}  \beta f_w(\hat{r}([x, y_w^{<t}], y_w^t))\log\frac {\pi_{\theta} (y_w^t|[x, y_w^{<t}])}{\pi_{\text{ref}}(y_w^t|[x, y_w^{<t}])} \nonumber \\
& ~\quad - \sum_{t=0}^{T_l-1}  \beta f_l(\hat{r}([x, y_l^{<t}], y_l^t))\log\frac {\pi_{\theta} (y_l^t|[x, y_l^{<t}])}{\pi_{\text{ref}}(y_l^t|[x, y_l^{<t}])}; \label{phi}
\end{align}
\begin{equation*}
\label{delta}
\begin{aligned}
& \delta(f, \hat{r}; x, y_w, y_l) \\
&  = \sum_{t=0}^{T_w-1}  \beta f_w(\hat{r}([x, y_w^{<t}], y_w^t)) \log Z([x, y_w^{<t}]) \\
& \qquad - \sum_{t=0}^{T_l-1}  \beta f_l(\hat{r}([x, y_l^{<t}], y_l^t)) \log Z([x, y_l^{<t}]),
\end{aligned}
\end{equation*}
where $T_w$ and $T_l$ are the lengths of the responses $y_w$ and $y_l$ respectively. Then, the Bradley-Terry preference model with  token-level reward guidance is
\begin{equation}
\label{tbt}
\begin{aligned}
& \Pr(y_w \succ  y_l | x) \\
& = \sigma\left( \varphi(\pi_{\theta}, f, \hat{r}; x, y_w, y_l) + \delta( f, \hat{r}; x, y_w, y_l) \right).  \\
\end{aligned}
\end{equation}
The proof of \cref{tbt} is given in  \cref{appe4}.

The above function is not computable since it contains partition functions in  $\delta( f, \hat{r}; x, y_w, y_l)$. Notably, preference optimization aims to maximize the preference function in \cref{tbt} with respect to $\pi_{\theta}$, and $\delta( f, \hat{r}; x, y_w, y_l)$ does not depend on the policy $\pi_{\theta}$, we can 
eliminate  $\delta( f, \hat{r}; x, y_w, y_l)$ from \cref{tbt} based on the following \cref{equiv}.
\begin{theorem}
\label{equiv}
The preference function in \Cref{tbt} has the same maxima and the same ascent directions as the function $\sigma\left( \varphi(\pi_{\theta}, f, \hat{r}; x, y_w, y_l)\right)$. Moreover, for two policies $\pi_{\theta_1}$ and $\pi_{\theta_2}$, 
\begin{equation}   
\label{eqiv1}
\begin{aligned}
& \quad \sigma\left( \varphi(\pi_{\theta_1}, f, \hat{r}; x, y_w, y_l) + \delta( f, \hat{r}; x, y_w, y_l) \right) \\
& >  \sigma\left( \varphi(\pi_{\theta_2}, f, \hat{r}; x, y_w, y_l) + \delta( f, \hat{r}; x, y_w, y_l) \right) 
\end{aligned}
\end{equation}
if and only if 
\begin{equation}   
\label{eqiv2}
\begin{aligned}
&\quad \sigma\left( \varphi(\pi_{\theta_1}, f, \hat{r}; x, y_w, y_l)\right) \\
&  > \sigma\left( \varphi(\pi_{\theta_2}, f, \hat{r}; x, y_w, y_l)\right).
\end{aligned}
\end{equation}
\end{theorem}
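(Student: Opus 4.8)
The plan is to exploit a single structural fact — that $\delta(f,\hat r; x,y_w,y_l)$ does not depend on the policy $\pi_\theta$ — together with strict monotonicity of the sigmoid. First I would make the independence explicit: by \cref{thm3}, each partition function appearing in $\delta$ has the form $Z([x,y^{<t}]) = \mathbb{E}_{a_t\sim\pi_{\text{ref}}(\cdot|[x,y^{<t}])}\big[\exp\!\big(r_\phi(s_t,a_t)/(\beta f(\hat r(s_t,a_t)))\big)\big]$, which is assembled only from $\pi_{\text{ref}}$, $r_\phi$, $\beta$, $f$ and $\hat r$. Hence for fixed $(x,y_w,y_l)$ the quantity $\delta$ is a constant in $\theta$; write $c := \delta(f,\hat r; x,y_w,y_l)$ and $\varphi_i := \varphi(\pi_{\theta_i}, f, \hat r; x, y_w, y_l)$.

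For the stated equivalence I would simply use that $\sigma:\mathbb{R}\to(0,1)$ is strictly increasing, so $\sigma(\varphi_1+c) > \sigma(\varphi_2+c) \iff \varphi_1+c > \varphi_2+c \iff \varphi_1 > \varphi_2 \iff \sigma(\varphi_1) > \sigma(\varphi_2)$, which is exactly \cref{eqiv1} $\Leftrightarrow$ \cref{eqiv2}.

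For the \emph{same maxima} claim, the same chain of equivalences applied with the quantifier ranging over $\theta$ gives $\theta^\star \in \argmax_\theta \sigma(\varphi(\pi_\theta)+c)$ iff $\theta^\star \in \argmax_\theta (\varphi(\pi_\theta)+c)$ iff (dropping the additive constant $c$) $\theta^\star \in \argmax_\theta \varphi(\pi_\theta)$ iff $\theta^\star \in \argmax_\theta \sigma(\varphi(\pi_\theta))$. For the \emph{same ascent directions} claim, I would differentiate in $\theta$ via the chain rule, using $\nabla_\theta c = 0$: $\nabla_\theta \sigma(\varphi(\pi_\theta)+c) = \sigma'(\varphi(\pi_\theta)+c)\,\nabla_\theta\varphi(\pi_\theta)$ and $\nabla_\theta \sigma(\varphi(\pi_\theta)) = \sigma'(\varphi(\pi_\theta))\,\nabla_\theta\varphi(\pi_\theta)$. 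Since $\sigma'(u) = \sigma(u)(1-\sigma(u)) > 0$ everywhere, the two gradients are strictly positive scalar multiples of the common vector $\nabla_\theta\varphi(\pi_\theta)$; hence their normalizations coincide and they vanish simultaneously, i.e.\ gradient ascent on the two objectives follows the same directions and has the same stationary points.

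The argument is essentially elementary, and there is no real technical obstacle. The one point that genuinely needs care — and the one I would state explicitly in the write-up — is the claim that $\delta$ carries no $\pi_\theta$-dependence, since this is precisely what licenses treating it as an additive constant; this is where \cref{thm3} enters in an essential way, because the relevant partition function is an expectation under $\pi_{\text{ref}}$ rather than under $\pi_\theta$. Everything after that reduces to strict monotonicity of $\sigma$ and positivity of $\sigma'$.
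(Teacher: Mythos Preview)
Your proposal is correct and follows essentially the same approach as the paper: both arguments hinge on the fact that $\delta$ is independent of $\pi_\theta$ (via \cref{thm3}), then use strict monotonicity of $\sigma$ for the order-preservation and maxima claims, and positivity of $\sigma'$ combined with the chain rule to show the two gradients are positive scalar multiples of $\nabla_\theta\varphi$ for the ascent-direction claim. Your write-up is slightly more streamlined (the paper additionally unpacks the maxima argument via a neighborhood definition), but there is no substantive difference in strategy or ingredients.
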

The proof of \cref{equiv} is given in \cref{appe4-1}. \cref{equiv} is due to that, the sigmoid function is strictly increasing and it does not change the order of values. Hence 
\cref{equiv} suggests that, maximizing $\sigma\left( \varphi(\pi_{\theta}, f, \hat{r}; x, y_w, y_l) \right)$ with respect to $\pi_{\theta}$ is equivalent to maximizing the preference function 
in \Cref{tbt} with respect to $\pi_{\theta}$.  Furthermore, the equivalence between \cref{eqiv1} and \cref{eqiv2} demonstrates that, for any two policies $ \pi_{\theta_1}$ and $\pi_{\theta_2}$, canceling the term $\delta( f, \hat{r}; x, y_w, y_l)$ from \cref{eqiv1} does not affect the preference order of the responses $y_w$ and $y_l$.

\textbf{Loss Function.} Since we only care about the optimal policy of \Cref{tbt}, by \Cref{equiv} we may redefine the preference function as  $\sigma\left( \varphi(\pi_{\theta}, f, \hat{r}; x, y_w, y_l)\right)$, i.e., 
\begin{equation*}
\begin{aligned}
& \Pr(y_w \succ  y_l | x)  \triangleq \sigma\left( \varphi(\pi_{\theta}, f, \hat{r}; x, y_w, y_l)\right)  \\
& = \sigma\left( \sum_{t=0}^{T_w-1}  \beta f_w(\hat{r}([x, y_w^{<t}], y_w^t))\log\frac {\pi_{\theta} (y_w^t|[x, y_w^{<t}])}{\pi_{\text{ref}}(y_w^t|[x, y_w^{<t}])} \right. \\
& ~\quad - \left.\sum_{t=0}^{T_l-1} \beta f_l(\hat{r}([x, y_l^{<t}], y_l^t))\log\frac {\pi_{\theta} (y_l^t|[x, y_l^{<t}])}{\pi_{\text{ref}}(y_l^t|[x, y_l^{<t}])}\right),
\end{aligned}
\end{equation*}
which specifies the per-instance human preference and is computable. Furthermore, analogous to \cref{DPO}, we formulate the loss function for enhancing DPO by harnessing token-level reward guidance as follows:
\begin{equation}
\label{TGDPO}
\begin{aligned}
& \mathcal{L}_{\text{TGDPO}}(\pi_{\theta})   = - \mathbb{E}_{  (x, y_w, y_l) \sim \mathcal{D}} \left[\log\sigma\left( \sum_{t=0}^{T_w-1}  \right.\right. \\
& \quad \quad \beta \cdot f_w(\hat{r}([x, y_w^{<t}], y_w^t)) \cdot \log\frac {\pi_{\theta} (y_w^t|[x, y_w^{<t}])}{\pi_{\text{ref}}(y_w^t|[x, y_w^{<t}])} - \\
& ~~~  \left.\left. \sum_{t=0}^{T_l-1} \beta f_l(\hat{r}([x, y_l^{<t}], y_l^t))   \log\frac {\pi_{\theta} (y_l^t|[x, y_l^{<t}])}{\pi_{\text{ref}}(y_l^t|[x, y_l^{<t}])}\right)\right].
\end{aligned}
\end{equation}

The loss function $\mathcal{L}_{\text{TGDPO}}(\pi_{\theta})$ in \cref{TGDPO} provides a framework of direct preference optimization, by leveraging  $f(\hat{r}(s_t, a_t))$ to shape the optimization of the policy on the tokens of win and lose responses. Specifically, with an appropriate choice of $f(\cdot)$, this framework can recover several known direct preference optimization methods. For example, if we take $f_w\equiv f_l \equiv 1$, then \cref{TGDPO}  is the loss function of DPO  \cite{NEURIPS2023_a85b405e} (for others, see \cref{D.2}). Nonetheless, the aim of this framework is to use token-level reward $\hat{r}(s_t, a_t)$ to shape the loss function in \cref{TGDPO} directly. In the following, we provide a practical example.

\textbf{Practical Method.} For convenience, we adopt the induced DPO reward \cite{NEURIPS2023_a85b405e} for the token-level reward $\hat{r}(s_t, a_t)$. Suppose $\pi_{\hat\theta}$ is an optimal policy of the loss function of DPO in \cref{DPO}, \citet{rafailov2024rqlanguagemodel} showed in Theorem 1 that DPO learns implicitly a token-level reward of the form 
\begin{equation*}
\label{rhat}
\hat{r}([x, y^{<t}], y^t)=\beta\log\frac{\pi_{\hat\theta}(y^t| [x, y^{<t}])}{\pi_{\text{ref}}(y^t|[x, y^{<t}])}.
\end{equation*}
Hence for \cref{TGDPO},  we simply set
\begin{equation}
\label{frhat}
\begin{aligned}
  &  f_w(\hat{r}([x, y_w^{<t}], y_w^t)) = 1 + \alpha~ \hat{r}([x, y_w^{<t}], y_w^t);  \\
  &   f_l(\hat{r}([x, y_l^{<t}], y_l^t)) =1 - \alpha~ \hat{r}([x, y_l^{<t}], y_l^t),     
\end{aligned}    
\end{equation}
where $\alpha$ is a positive constant. Obviously, this setting meets \cref{assump} if $\alpha$ is small enough.

\textbf{Motivation of the Practical Method.} Observing the loss function $\mathcal{L}_{\text{TGDPO}}(\pi_{\theta})$ in \cref{TGDPO}, below is the motivation for setting $f(\hat{r}([x, y^{<t}], y^t))$ as in \cref{frhat}:

\begin{itemize}
\item For a token $y_w^t$ in win-response, if $\hat{r}([x, y_w^{<t}], y_w^t)>0$, then it is identified as a preferred token, implying that the state-action should be reinforced, and then it is assigned a larger weight $1 + \alpha \hat{r}([x, y_w^{<t}], y_w^t)$. In this way, the gradient of our loss function $\mathcal{L}_{\text{TGDPO}}(\pi_{\theta})$ at this state-action is 
$$
\beta (1 + \alpha \hat{r}([x, y_w^{<t}], y_w^t))\nabla_{\pi_{\theta}}\log\frac {\pi_{\theta} (y_w^t|[x, y_w^{<t}])}{\pi_{\text{ref}}(y_w^t|[x, y_w^{<t}])},
$$
which is scaled up by  $1 + \alpha \hat{r}([x, y_w^{<t}], y_w^t)$. As a result, optimizing our loss function $\mathcal{L}_{\text{TGDPO}}(\pi_{\theta})$ encourages the policy to assign a higher probability to this action.

\item Similarly, the token $y_w^t$ satisfying  $\hat{r}([x, y_w^{<t}], y_w^t) < 0$  is identified as a dispreferred token, although it is in the preferred response $y_w$. Then by assigning weight $1+\alpha\hat{r}([x, y_w^{<t}], y_w^t) < 1$,  optimizing our loss function $\mathcal{L}_{\text{TGDPO}}(\pi_{\theta})$ would  progressively assign a lower probability to this action.

\item For a token $y_l^t$ in lose-response, if  $\hat{r}([x, y_l^{<t}], y_l^t)<0$, then it is considered as a dispreferred token. Thus since the weight $1-\alpha \hat{r}([x, y_l^{<t}], y_l^t)) > 1$, optimizing the loss function $\mathcal{L}_{\text{TGDPO}}(\pi_{\theta})$ 
would assign an even lower probability to this action.

\item The token $y_l^t$ satisfying  $\hat{r}([x, y_l^{<t}], y_l^t)>0$ is considered as a preferred token, although it is in the dispreferred response $y_l$.  In this case $1-\alpha \hat{r}([x, y_l^{<t}], y_l^t)) < 1$, then optimizing the loss function $\mathcal{L}_{\text{TGDPO}}(\pi_{\theta})$ would  progressively assign a higher probability to this action.

\end{itemize}

The above analysis indicates that our direct preference optimization with token-level reward guidance performs in the token-level granularity, and 
exhibits varying degrees of deviation from the reference policy based on their respective rewards. This property inherently empowers   our approach to discover satisfactory policies, leading to better policies than existing approaches. This property should be attributed to the modified token-level PPO with reward guidance in \cref{DPOTLR}, and the derived loss function $\mathcal{L}_{\text{TGDPO}}(\pi_{\theta})$ for direct preference optimization in \cref{TGDPO} with the setting of $f(\hat{r}([x, y^{<t}], y^t))$ in \cref{frhat}.

\begin{table*}[!t]
\centering
\small 
\caption{Experiment results on AlpacaEval 2~\cite{AlpacaEval}, Arena-Hard~\cite{arenahard2024}, and MT-Bench~\cite{mt-bench} benchmarks.}

\resizebox{0.9\textwidth}{!}{
\begin{tabular}{lcccccccc}
\toprule
\multirow{3}{*}{\textbf{Method}} & \multicolumn{4}{c}{\textbf{Llama3-8B-Instruct PairRM}} & \multicolumn{4}{c}{\textbf{Llama3-8B-Instruct ArmoRM}} \\ 
\cmidrule(lr){2-5}\cmidrule(lr){6-9}
& \multicolumn{1}{c}{\textbf{AlpacaEval 2}} & \multicolumn{1}{c}{\textbf{Arena-Hard}} & \multicolumn{2}{c}{\textbf{MT-Bench}} & \multicolumn{1}{c}{\textbf{AlpacaEval 2}} & \multicolumn{1}{c}{\textbf{Arena-Hard}} & \multicolumn{2}{c}{\textbf{MT-Bench}} \\
\cmidrule(lr){2-2}\cmidrule(lr){3-3} \cmidrule(lr){4-5} \cmidrule(lr){6-6}\cmidrule(lr){7-7}\cmidrule(lr){8-9} 
& {\scriptsize \bf Win Rate (\%)} & {\scriptsize \bf Win Rate (\%)} & {\scriptsize \bf Score} & {\scriptsize \bf Win Rate(\%)}  & {\scriptsize \bf Win Rate (\%)} & {\scriptsize \bf Win Rate (\%)} & {\scriptsize \bf Score} & {\scriptsize \bf Win Rate(\%)} \\
\midrule
SFT &  30.6 & 21.4 & 7.9 & 27.5 & 30.6 & 21.4 & 7.9 & 27.5  \\
\midrule
DPO & 41.7 & 30.4 & \textbf{8.0} & 37.5 & 40.8 & 36.2 & \textbf{8.2}  & \textbf{46.3}   \\
SimPO & 39.8 & 28.7 & 7.8 & 32.5 & 37.0 & 28.1 & 7.8 & 42.5  \\
\midrule
TGDPO & \textbf{43.9} & \textbf{34.3} & \textbf{8.0} & \textbf{41.9} & \textbf{42.5} & \textbf{40.5} & 7.9 & 45.0  \\
\midrule[.7pt]
\multirow{3}{*}{\textbf{Method}} & \multicolumn{4}{c}{\textbf{Llama3.2-3B-Instruct ArmoRM}} & \multicolumn{4}{c}{\textbf{Gemma2-2B-it ArmoRM}} \\ 
\cmidrule(lr){2-5}\cmidrule(lr){6-9}
& \multicolumn{1}{c}{\textbf{AlpacaEval 2}} & \multicolumn{1}{c}{\textbf{Arena-Hard}} & \multicolumn{2}{c}{\textbf{MT-Bench}} & \multicolumn{1}{c}{\textbf{AlpacaEval 2}} & \multicolumn{1}{c}{\textbf{Arena-Hard}} & \multicolumn{2}{c}{\textbf{MT-Bench}} \\
\cmidrule(lr){2-2}\cmidrule(lr){3-3} \cmidrule(lr){4-5} \cmidrule(lr){6-6}\cmidrule(lr){7-7}\cmidrule(lr){8-9} 
& {\scriptsize \bf Win Rate (\%)} & {\scriptsize \bf Win Rate (\%)} & {\scriptsize \bf Score} & {\scriptsize \bf Win Rate (\%)}  & {\scriptsize \bf Win Rate (\%)} & {\scriptsize \bf Win Rate (\%)} & {\scriptsize \bf Score} & {\scriptsize \bf Win Rate (\%)} \\
\midrule
SFT &  23.8 & 17.1 & 7.0 & 16.3 & 32.8 & 20.1 & 7.9 & 37.5  \\
 \midrule
DPO & 29.6 & 23.2 & 7.9 & 29.4 & 40.8 & 26.4 & 8.0 & 43.1   \\
SimPO & 26.2 & 22.6 & 7.4 & 15.7 & 34.8 & 21.1 & 7.8 & 40.0   \\
\midrule
TGDPO & \textbf{35.8} & \textbf{25.4} & \textbf{8.1} & \textbf{36.9} & \textbf{43.0}& \textbf{30.7} & \textbf{8.1} & \textbf{46.9}  \\
\bottomrule
\end{tabular}
}
\label{tab:main_res}
\end{table*}

\section{Experiments}
\label{experiments}
In this section, we first outline our experiment settings in \cref{exp_setting}. Then we show the main experiment results in \cref{exp_main}. Lastly, we provide an empirical analysis of the unique properties of our TGDPO in \cref{exp_analysis}.

\subsection{Experiment Settings}
\label{exp_setting}
\noindent \textbf{Models and Training Settings.} We conduct experiments on three models: Llama3-8B-Instruct \cite{grattafiori2024llama3herdmodels}, Llama3.2-3B-Instruct, and Gemma2-2B-it \cite{gemmateam2024gemma2improvingopen}. Following \cite{meng2024simpo}, we use prompts from the UltraFeedback dataset \cite{cui2024ultrafeedbackboostinglanguagemodels} and let each model generate 5 responses with a temperature of 0.8. These responses are then ranked using the ArmoRM model \cite{ArmoRM}. The highest and lowest-ranked responses are selected as the chosen and rejected samples, respectively. For Llama3-8B-Instruct, we further utilize the PairRM model \cite{pairRM} to annotate response scores, thereby evaluating the robustness of algorithms in handling varying quality of sample annotations. Hyperparameter settings are presented in \cref{hs}.

\noindent \textbf{Evaluation Benchmarks.} We primarily evaluate trained models' performance using three widely recognized open-ended
instruction-following benchmarks: MT-Bench \cite{mt-bench}, Arena-Hard \cite{arenahard2024}, and AlpacaEval 2 \cite{AlpacaEval}, which assess models' response quality across diverse queries. For MT-Bench, we report the MT-Bench score and win rate against GPT-4. For Arena-Hard, we report the win rate against GPT-4-0314. For AlpacaEval 2, we report the win rate against GPT-4 Turbo. Further details are discussed in \cref{benchmark_detail}.

\noindent \textbf{Baseline Methods.} We compare our TGDPO with two state-of-the-art preference optimization methods: DPO \cite{NEURIPS2023_a85b405e} and SimPO \cite{meng2024simpo}. We also include the pre-trained Instruct model as a baseline.

\subsection{Main Results}
\label{exp_main}
The experiment results on AlpacaEval 2 \cite{AlpacaEval}, Arena-Hard \cite{arenahard2024}, and MT-Bench \cite{mt-bench} are summarized in \cref{tab:main_res}.  Our TGDPO consistently outperforms baseline methods across these benchmarks. Notably, on AlpacaEval 2, it achieves a win rate increase of up to 6.2 over the best baseline, while on MT-Bench, the win rate improves by up to 7.5. For the challenging Arena-Hard benchmark, our method demonstrates stable superior performance, with a win rate enhancement of up to 4.3 compared to the best baseline. These consistent performance improvements underscore the effectiveness of our approach. More experiment results and comparisons are presented in \cref{mexp}.

\begin{figure}[t]
  \centering  
\includegraphics[width=0.45\textwidth]{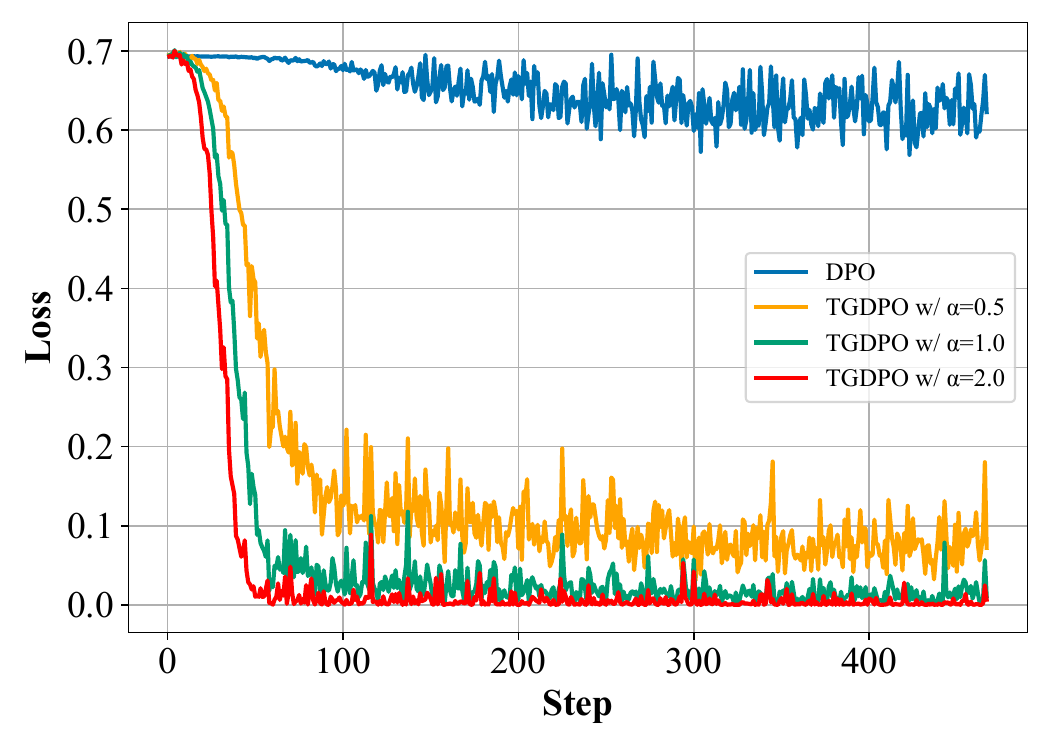 }
  \caption{Training loss curve for DPO and our TGDPO with different values of $\alpha$. Changing the value of $\alpha$ leads to different convergence speeds for our method.}
  \label{fig:plot_loss}
\end{figure}

\begin{table}[t]
\centering
\caption{
Analysis of preference optimization methods' performance upon training loss convergence.}
\label{tbl:converge}
\scalebox{0.9}{
\begin{tabular}{c c c}
\toprule
\textbf{Method} & \multicolumn{1}{c}{\textbf{AlpacaEval 2}} & \multicolumn{1}{c}{\textbf{Arena-Hard}} \\
\cmidrule(lr){2-2} \cmidrule(lr){3-3}
 & {\scriptsize \bf Win Rate (\%)} & {\scriptsize \bf Win Rate (\%)} \\ \midrule
SFT  & 30.6   & 21.4   \\ 
 DPO  & 41.7 & 30.4   \\ 
SimPO  & 39.8 & 28.7  \\ 
 \midrule
DPO  w/ convergence & 30.7   & 17.9   \\ 
SimPO w/ convergence & 4.6   & 2.4   \\ 
TGDPO w/ convergence  & \textbf{43.9}  & \textbf{34.3}    \\ \bottomrule
\end{tabular}}
\end{table}

\subsection{Analysis}
\label{exp_analysis}
In this section, we present an empirical analysis of the unique properties of our TGDPO in comparison to conventional preference optimization approaches. The analysis is conducted under the Llama3-8B-Instruct PairRM setting.

\noindent \textbf{TGDPO Leads to Satisfactory Results upon Loss Convergence.} A well-known challenge in preference optimization algorithms is the misalignment between loss minimization and model performance \cite{onlineDPO}. Specifically, minimizing the loss for many preference optimization methods often results in degenerate policies. This issue necessitates extensive hyperparameter tuning to identify a sweet spot between the initialization and convergence points, significantly limiting the practicality and efficiency of these algorithms. As shown in \cref{fig:plot_loss}, the optimal hyperparameters for DPO barely reduce its loss. In contrast, we empirically find that TGDPO enables convergence in much fewer steps than conventional preference optimization algorithms. In \cref{fig:plot_loss}, TGDPO demonstrates consistent and stable loss reduction toward convergence. We assume it is because TGDPO's token-level reward inherently distinguishes preferred and dispreferred tokens.

Furthermore, in \cref{tbl:converge}, we compare benchmark performances by training each method using their default configurations and training them until loss convergence. The results reveal that both DPO and SimPO suffer substantial performance degradation upon convergence, with SimPO’s win rates dropping to single digits. Conversely, TGDPO maintains exceptional performance at the convergence point. These findings highlight the necessity of extensive hyperparameter searches for traditional preference optimization algorithms, whereas TGDPO simplifies the process, significantly improving efficiency and usability.

\begin{table}[t]
\centering
\caption{
Analysis of our TGDPO's performance upon training loss convergence with different convergence speeds.}
\label{tbl:different_converge_speed}
\scalebox{0.85}{
\begin{tabular}{c c c}
\toprule
\textbf{Method} & \multicolumn{1}{c}{\textbf{AlpacaEval 2}} & \multicolumn{1}{c}{\textbf{Arena-Hard}} \\
\cmidrule(lr){2-2} \cmidrule(lr){3-3}
 & {\scriptsize \bf Win Rate (\%)} & {\scriptsize \bf Win Rate (\%)} \\ \midrule
SFT  & 30.6   & 21.4   \\ 
 \midrule
TGDPO w/ $\alpha = 0.5$   &\textbf{43.9}  & \textbf{34.3}   \\ 
TGDPO w/ $\alpha = 1.0$   & 42.5   & 33.9   \\ 
TGDPO w/ $\alpha = 2.0$   & 43.3  & \textbf{34.3}    \\ \bottomrule
\end{tabular}}
\end{table}

\begin{table}[t]
\centering
\caption{
Analysis of our TGDPO's robustness  using different token-level rewards $\hat{r}(s_t, a_t)$.}
\label{tbl:robust_beta}
\scalebox{0.85}{
\begin{tabular}{c c c}
\toprule
\textbf{Method} & \multicolumn{1}{c}{\textbf{AlpacaEval 2}} & \multicolumn{1}{c}{\textbf{Arena-Hard}} \\
\cmidrule(lr){2-2} \cmidrule(lr){3-3}
 & {\scriptsize \bf Win Rate (\%)} & {\scriptsize \bf Win Rate (\%)} \\ \midrule
SFT  & 30.6   & 21.4   \\ 
DPO w/ $\beta = 0.1$  & 34.8   & 26.7   \\ 
DPO w/ $\beta = 0.01$  & 41.7 & 30.4  \\ 
 \midrule
TGDPO w/ $\beta = 0.1$ for $\hat{r}(s_t, a_t)$  & 42.8   & \textbf{34.3}   \\ 
TGDPO w/ $\beta = 0.01$ for $\hat{r}(s_t, a_t)$  & \textbf{43.9}   & \textbf{34.3}   \\ 
\bottomrule
\end{tabular}}
\end{table}

\noindent \textbf{TGDPO Enables Control Over Convergence Speed.} TGDPO offers the flexibility to control the speed of convergence by adjusting the value of $\alpha$ in \cref{TGDPO}. A larger $\alpha$ provides stronger token-level guidance, resulting in faster convergence, while a smaller $\alpha$ aligns the algorithm more closely with conventional DPO behavior. As illustrated in \cref{fig:plot_loss}, increasing $\alpha$ leads to a more rapid loss reduction compared to lower values of $\alpha$. Additionally, in \cref{tbl:different_converge_speed}, we compare benchmark performances at the respective convergence points for different values of $\alpha$. Specifically, we evaluate checkpoints at step 50 for $\alpha=2.0$, step 60 for $\alpha=1.0$, and epoch 1 for $\alpha=0.5$. The results demonstrate comparable performance across all configurations, especially for the challenging Arena-Hard benchmark. This desirable property of TGDPO allows for early stopping once the loss converges, significantly reducing computational costs without compromising performance.

\noindent \textbf{TGDPO is Robust to Variations in Token-Level Rewards $\boldsymbol{\hat{r}(s_t, a_t)}$.} To make TGDPO practical, we propose using token-level rewards derived from pre-trained DPO models as a convenient implementation. A key question arises: how sensitive is TGDPO to the quality of the token-level rewards $\hat{r}(s_t, a_t)$ defined in \cref{TGDPO}? To investigate this, we analyze the behavior of TGDPO using token-level rewards obtained from two DPO models trained with different $\beta$ values: $\beta=0.1$ and $\beta=0.01$. The benchmark performances of these models, along with TGDPO’s performance using their respective rewards, are presented in \cref{tbl:robust_beta}. As expected, DPO with $\beta=0.01$ significantly outperforms DPO with $\beta=0.1$. However, when the token-level rewards from these models are used in TGDPO, the resulting performance is nearly identical. This finding highlights TGDPO's robustness to variations in the quality of token-level rewards, making it less dependent on the specific characteristics of the pre-trained DPO model. Such robustness further enhances TGDPO’s practicality and reliability.

\section{Conclusion}

This paper enhances DPO by incorporating token-level reward guidance, which is achieved by decomposing sequence-level proximal policy optimization into a series of token-level proximal policy optimization problems. We formulate the problem of token-level proximal policy optimization with token-level reward guidance. The problem admits a closed-form optimal token-level policy with which the corresponding token-level reward can be represented. Using the obtained token-level reward and Bradley-Terry model, we propose TGDPO, a sequence-level DPO algorithm  framework with token-level reward guidance. Moreover, we introduce a practical token-level reward guidance. Extensive experiments on MT-Bench, AlpacaEval 2, and Arena-Hard demonstrate TGDPO's superiorities.

\section*{Impact Statement}

This paper enhances DPO by incorporating token-level reward guidance. This integration significantly boosts DPO's performance. Although the current evaluation concentrates on helpfulness, we believe our method would also benefit other important aspects of LLM alignment, such as safety, honesty, and fairness.




\bibliographystyle{icml2025}

\newpage
\appendix
\onecolumn

\section{Mathematical Derivations}

\subsection{Proof of \cref{thm1}} \label{appe2}
\begin{theorem}
\label{thm11}
The maximum value of the sequence-level proximal policy optimization   in \cref{TPPO} is upper bounded by the summation from $t=0, 1, \ldots, T-1$ of the maximum value of the problem:
\begin{equation*}
 \max_{\pi_{{\bf\theta}}} \mathbb{E}_{s_t\sim\mathcal{D}_t, a_t\sim \pi_{\theta} (\cdot|s_t) } \left[r_{\phi}(s_t, a_t) - \beta\log \frac{ \pi_{\theta} (a_t|s_t)}{\pi_{\text{ref}}(a_t|s_t)}\right], \label{tlppo11}
\end{equation*} 
where $s_t\sim\mathcal{D}_t$ denotes that $s_0=x\sim\mathcal{D}$ and $a_p\sim \pi_{\theta}(\cdot| s_p)$, $p=0, 1, \ldots, t-1$. 
\end{theorem}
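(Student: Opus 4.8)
The plan is to start from the reformulation in \cref{equivppo} of the sequence-level objective in \cref{TPPO}, where the expectation is taken over the rollout $s_0=x\sim\mathcal{D}$, $a_t\sim\pi_\theta(\cdot|s_t)$, $t=0,1,\ldots,T-1$, and to exploit linearity of expectation to split the expectation of the sum $\sum_{t=0}^{T-1}\bigl(r_\phi(s_t,a_t)-\beta\log\tfrac{\pi_\theta(a_t|s_t)}{\pi_{\text{ref}}(a_t|s_t)}\bigr)$ (which already appears via \cref{decomp}) into a sum of $T$ separate expectations, one per time step. For each fixed $t$, the summand depends only on the pair $(s_t,a_t)$; since in the language-model MDP the state $s_t=[x,a^{<t}]$ is a deterministic function of $(s_0,a_0,\ldots,a_{t-1})$ and the later actions $a_{t+1},\ldots,a_{T-1}$ do not enter the summand, marginalizing the trajectory distribution down to its $(s_t,a_t)$-marginal (tower rule) gives exactly $\mathbb{E}_{s_t\sim\mathcal{D}_t,\,a_t\sim\pi_\theta(\cdot|s_t)}\bigl[r_\phi(s_t,a_t)-\beta\log\tfrac{\pi_\theta(a_t|s_t)}{\pi_{\text{ref}}(a_t|s_t)}\bigr]$, with $\mathcal{D}_t$ the law of $s_t$ under the rollout of $\pi_\theta$, precisely as defined in the statement.

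Having written the value of \cref{TPPO} at any policy $\pi_\theta$ as $\sum_{t=0}^{T-1}G_t(\pi_\theta)$ with $G_t(\pi_\theta)=\mathbb{E}_{s_t\sim\mathcal{D}_t,\,a_t\sim\pi_\theta(\cdot|s_t)}\bigl[r_\phi(s_t,a_t)-\beta\log\tfrac{\pi_\theta(a_t|s_t)}{\pi_{\text{ref}}(a_t|s_t)}\bigr]$, the next step is the elementary inequality $\max_{\pi_\theta}\sum_{t=0}^{T-1}G_t(\pi_\theta)\le\sum_{t=0}^{T-1}\max_{\pi_\theta}G_t(\pi_\theta)$, valid because a single maximizing argument cannot outperform optimizing each term independently. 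Term by term, the right-hand side is the maximum value of \cref{tlppo}, which yields the claimed bound. A point I would state carefully is that in $G_t$ the policy $\pi_\theta$ occurs simultaneously in the state law $\mathcal{D}_t$ and in the conditional action law, so $\max_{\pi_\theta}G_t(\pi_\theta)$ is genuinely the token-level PPO problem of \cref{tlppo} with its policy-dependent state distribution, not a simplified surrogate.

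I do not anticipate a deep obstacle: the only step needing care is the marginalization identity, i.e.\ justifying that the expectation of a quantity depending only on $(s_t,a_t)$ under the full trajectory distribution equals its expectation under the induced marginal of $\mathcal{D}_t$ and $\pi_\theta(\cdot|s_t)$, which follows from the tower rule together with $s_t=[x,a^{<t}]$ being determined by the prompt and the first $t$ actions. Everything else reduces to linearity of expectation and ``max of a sum $\le$ sum of maxes.''
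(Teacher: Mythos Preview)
Your proposal is correct and follows essentially the same approach as the paper: rewrite the sequence-level objective via \cref{equivppo} and \cref{decomp}, use linearity of expectation to pull the sum over $t$ outside, marginalize each term to the $(s_t,a_t)$ law (the paper does this step after the inequality, you do it before, but the value is unchanged either way), and apply $\max_{\pi_\theta}\sum_t G_t(\pi_\theta)\le\sum_t\max_{\pi_\theta}G_t(\pi_\theta)$. Your explicit remark that $\pi_\theta$ enters both $\mathcal{D}_t$ and the action law in each $G_t$ is a useful clarification that the paper leaves implicit.
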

\begin{proof}
According to \cref{pi}, $y\sim \prod_{t=0}^{T-1}\pi_{\theta} (a_t|s_t)$ in \cref{TPPO} is equivalent to $y\sim \pi_{\theta} ( \cdot | x )$, which is further equivalent to $s_0=x\sim\mathcal{D}$, $a_p\sim \pi_{\theta}(\cdot| s_p)$, $p=0, 1, \ldots, T-1$. Thus for the sequence-level proximal policy optimization  in \cref{TPPO},
\begin{equation*}\label{TPPO1}
\begin{aligned}
& \max_{\pi_{{\bf\theta}}} \mathbb{E}_{x\sim\mathcal{D}, y\sim \prod_{t=0}^{T-1}\pi_{\theta} (a_t|s_t)}  \left[\sum_{t=0}^{T-1}r_{\phi}(s_t, a_t) -  \beta\log \frac{ \pi_{\theta} (y|x)}{\pi_{\text{ref}}(y|x)}\right] \\
& = \max_{\pi_{{\bf\theta}}} \mathbb{E}_{s_0\sim\mathcal{D}, a_p\sim \pi_{\theta}(\cdot| s_p), p=0, 1, \ldots, T-1}  \left[\sum_{t=0}^{T-1}r_{\phi}(s_t, a_t) -  \beta\log \frac{ \pi_{\theta} (y|x)}{\pi_{\text{ref}}(y|x)}\right] \\
& = \max_{\pi_{{\bf\theta}}} \mathbb{E}_{s_0\sim\mathcal{D}, a_p\sim \pi_{\theta}(\cdot| s_p), p=0, 1, \ldots, T-1}  \left[\sum_{t=0}^{T-1}r_{\phi}(s_t, a_t) -  \sum_{t=0}^{T-1}  \beta\log \frac{ \pi_{\theta} (a_t|s_t)}{\pi_{\text{ref}}(a_t|s_t)} \right] \quad (\text{by \cref{decomp}}) \\
& = \max_{\pi_{{\bf\theta}}} \mathbb{E}_{s_0\sim\mathcal{D}, a_p\sim \pi_{\theta}(\cdot| s_p), p=0, 1, \ldots, T-1}  \left[\sum_{t=0}^{T-1} \left[ r_{\phi}(s_t, a_t) -   \beta\log \frac{ \pi_{\theta} (a_t|s_t)}{\pi_{\text{ref}}(a_t|s_t)} \right] \right]  \\
& =  \max_{\pi_{{\bf\theta}}} \sum_{t=0}^{T-1} \mathbb{E}_{s_0\sim\mathcal{D}, a_p\sim \pi_{\theta}(\cdot| s_p), p=0, 1, \ldots, T-1} \left[ r_{\phi}(s_t, a_t) -   \beta\log \frac{ \pi_{\theta} (a_t|s_t)}{\pi_{\text{ref}}(a_t|s_t)} \right]  \\
& \le  \sum_{t=0}^{T-1} \max_{\pi_{{\bf\theta}}} \mathbb{E}_{s_0\sim\mathcal{D}, a_p\sim \pi_{\theta}(\cdot| s_p), p=0, 1, \ldots, T-1}  \left[ 
 r_{\phi}(s_t, a_t) -   \beta\log \frac{ \pi_{\theta} (a_t|s_t)}{\pi_{\text{ref}}(a_t|s_t)} \right] \\
& =   \sum_{t=0}^{T-1} \max_{\pi_{{\bf\theta}}} \mathbb{E}_{s_t\sim\mathcal{D}_t, a_t\sim \pi_{\theta}(\cdot| s_t)}  \left[ r_{\phi}(s_t, a_t) -   \beta\log \frac{ \pi_{\theta} (a_t|s_t)}{\pi_{\text{ref}}(a_t|s_t)} \right],
\end{aligned}
\end{equation*} 
where $s_t\sim\mathcal{D}_t$ denotes that $s_0=x\sim\mathcal{D}$ and $a_p\sim \pi_{\theta}(\cdot| s_p)$, $p=0, 1, \ldots, t-1$. This completes the proof. 
\end{proof}

\subsection{Proof of \cref{thm3}}  \label{appe3}
\begin{theorem}
The optimal policy for the action $a_t$ at time step $t$ of the modified  token-level proximal policy optimization  in \cref{tlppo2} is:
\begin{align}
\label{optp1}
\pi_{\theta_t} (a_t|s_t) = \frac {\pi_{\text{ref}}(a_t|s_t)
\exp{\left(\frac {r_{\phi}(s_t, a_t)}{\beta f(\hat{r}(s_t, a_t))}\right)}}{Z(s_t)}, 
\end{align}
where 
$Z(s_t)=\mathbb{E}_{a_t\sim\pi_{\text{ref}(\cdot| s_t)}}\left[ \exp{\left(\frac {r_{\phi}(s_t, a_t)}{\beta f(\hat{r}(s_t, a_t))}\right)}\right]$ is the partition function and $s_t\sim\mathcal{D}$ does not depend on $\pi_{\theta_t}$. Moreover, the token-level reward under the optimal policy is given by
\begin{align}
\label{optr2}
\frac{r_{\phi}(s_t, a_t)}{ f(\hat{r}(s_t, a_t)) } =  \beta \log\frac {\pi_{\theta_t} (a_t|s_t)}{\pi_{\text{ref}}(a_t|s_t)} + \beta  \log Z(s_t).
\end{align}
\end{theorem}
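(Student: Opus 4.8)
The plan is to mirror the classical DPO derivation of \citet{NEURIPS2023_a85b405e} applied to the single-step optimization in \cref{tlppo2}. First I would observe that \cref{tlppo2} is a per-state optimization: for each fixed $s_t\sim\mathcal{D}$ the objective only couples the values $\pi_{\theta}(\cdot|s_t)$ through the expectation over $a_t\sim\pi_{\theta}(\cdot|s_t)$, so it suffices to maximize, for each $s_t$,
\begin{equation*}
\mathbb{E}_{a_t\sim\pi_{\theta}(\cdot|s_t)}\left[\frac{r_{\phi}(s_t,a_t)}{\beta f(\hat r(s_t,a_t))} - \log\frac{\pi_{\theta}(a_t|s_t)}{\pi_{\text{ref}}(a_t|s_t)}\right]
\end{equation*}
over probability distributions $\pi_{\theta}(\cdot|s_t)$ on the token vocabulary. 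Here I use \cref{assump}, which guarantees $f(\hat r(s_t,a_t))>0$, so the ratio $r_{\phi}/(\beta f)$ is well-defined and finite.

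Next I would rewrite the per-state objective as a negative KL divergence plus a constant. Define $\pi^{*}(a_t|s_t) \propto \pi_{\text{ref}}(a_t|s_t)\exp\!\big(r_{\phi}(s_t,a_t)/(\beta f(\hat r(s_t,a_t)))\big)$ with normalizer $Z(s_t)=\mathbb{E}_{a_t\sim\pi_{\text{ref}}(\cdot|s_t)}\big[\exp(r_{\phi}(s_t,a_t)/(\beta f(\hat r(s_t,a_t))))\big]$. Since the vocabulary is finite and $f$ is positive, $Z(s_t)$ is a finite positive number and $\pi^{*}(\cdot|s_t)$ is a valid distribution. A direct algebraic manipulation shows the per-state objective equals $-\mathbb{D}_{\text{KL}}\big(\pi_{\theta}(\cdot|s_t)\,\|\,\pi^{*}(\cdot|s_t)\big) + \log Z(s_t)$, exactly as in the DPO derivation. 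Because $\log Z(s_t)$ does not depend on $\pi_{\theta}$ and KL divergence is minimized (value $0$) uniquely at equality of the two distributions (Gibbs' inequality), the maximizer is $\pi_{\theta_t}(\cdot|s_t)=\pi^{*}(\cdot|s_t)$, which is \cref{optp1}.

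Finally, to obtain \cref{optr2} I would simply take logarithms of \cref{optp1}: $\log\pi_{\theta_t}(a_t|s_t) = \log\pi_{\text{ref}}(a_t|s_t) + r_{\phi}(s_t,a_t)/(\beta f(\hat r(s_t,a_t))) - \log Z(s_t)$, then rearrange to isolate $r_{\phi}(s_t,a_t)/(\beta f(\hat r(s_t,a_t)))$ and multiply through by $\beta$. The fact that $s_t\sim\mathcal{D}$ is independent of $\pi_{\theta_t}$ — which is precisely why the relaxation from \cref{tlppo} to \cref{ntlppo}/\cref{tlppo2} was made — is what licenses treating the optimization state-by-state without worrying about the state distribution shifting with the policy.

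I do not expect a serious obstacle here; the argument is essentially identical to the DPO proof. The one point requiring mild care is the well-posedness of $Z(s_t)$ and of the exponential reward term, which is handled by the positivity of $f$ from \cref{assump} together with the (implicit) assumption that the per-token reward $r_{\phi}(s_t,a_t)$ is bounded on the finite vocabulary; I would state this briefly rather than belabor it. A second minor point is justifying the interchange of the outer $\max$ over $\pi_{\theta}$ with the per-state decomposition — this follows because the policy's conditional distributions at distinct states are independent degrees of freedom, so maximizing the expectation over $s_t$ of a per-state quantity is achieved by maximizing each per-state quantity separately.
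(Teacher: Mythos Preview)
Your proposal is correct and follows essentially the same approach as the paper's proof: both rewrite the objective as $-\mathbb{D}_{\text{KL}}\big(\pi_\theta(\cdot|s_t)\,\|\,\pi^{*}(\cdot|s_t)\big)+\log Z(s_t)$, observe that $Z(s_t)$ is independent of $\pi_\theta$, and conclude via nonnegativity of KL. The only cosmetic difference is that you make the per-state reduction explicit up front while the paper carries the outer expectation $\mathbb{E}_{s_t\sim\mathcal{D}}[\cdot]$ through the algebra; your added remarks on well-posedness of $Z(s_t)$ and the state-by-state decomposition are reasonable but not present in the paper's version.
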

\begin{proof}
In \cref{tlppo2}, the  modified token-level proximal policy optimization  is:
\begin{align}\label{tlppoa1}
& \max_{\pi_{{\bf\theta}}} \mathbb{E}_{s_t\sim\mathcal{D}, a_t\sim \pi_{\theta} (\cdot|s_t) } \left[\frac {r_{\phi}(s_t, a_t)}{\beta f(\hat{r}(s_t, a_t))} - \log \frac{ \pi_{\theta} (a_t|s_t)}{\pi_{\text{ref}}(a_t|s_t)}\right] \nonumber\\
& = \max_{\pi_{{\bf\theta}}} \mathbb{E}_{s_t\sim\mathcal{D}, a_t\sim \pi_{\theta} (\cdot|s_t) } \left[ \log\left(\frac{\pi_{\text{ref}}(a_t|s_t)\exp{\left(\frac {r_{\phi}(s_t, a_t)}{\beta f(\hat{r}(s_t, a_t))}\right)}}{ \pi_{\theta} (a_t|s_t)} \right) \right] \nonumber \\
& = \max_{\pi_{{\bf\theta}}} \mathbb{E}_{s_t\sim\mathcal{D}, a_t\sim \pi_{\theta} (\cdot|s_t) } \left[ \log\left(\frac{\pi_{\text{ref}}(a_t|s_t)\exp{\left(\frac {r_{\phi}(s_t, a_t)}{\beta f(\hat{r}(s_t, a_t))}\right)}}{ Z(s_t)\pi_{\theta} (a_t|s_t)} \right) + \log Z(s_t) \right] \nonumber \\
& = \max_{\pi_{{\bf\theta}}} \mathbb{E}_{s_t\sim\mathcal{D}, a_t\sim \pi_{\theta} (\cdot|s_t) } \left[ \log\left(\frac{ 
\frac {1}{Z(s_t)} \pi_{\text{ref}}(a_t|s_t)\exp{\left(\frac {r_{\phi}(s_t, a_t)}{\beta f(\hat{r}(s_t, a_t))}\right)}}{ \pi_{\theta} (a_t|s_t)} \right) + \log Z(s_t) \right] \nonumber\\
& = \max_{\pi_{{\bf\theta}}} \mathbb{E}_{s_t\sim\mathcal{D}} \left[   
\mathbb{E}_{a_t\sim \pi_{\theta} (\cdot|s_t)} 
\left[ \log\left(\frac{ 
\frac {1}{Z(s_t)} \pi_{\text{ref}}(a_t|s_t)\exp{\left(\frac {r_{\phi}(s_t, a_t)}{\beta f(\hat{r}(s_t, a_t))}\right)}}{ \pi_{\theta} (a_t|s_t)} \right)
\right]  + \log Z(s_t)\right] \nonumber \\
& = \max_{\pi_{{\bf\theta}}} \mathbb{E}_{s_t\sim\mathcal{D}} 
\left[ - \mathbb{D}_{\text{KL}}\left[ 
\pi_{\theta} (a_t|s_t) || \frac {1}{Z(s_t)} \pi_{\text{ref}}(a_t|s_t)\exp{\left(\frac {r_{\phi}(s_t, a_t)}{\beta f(\hat{r}(s_t, a_t))}\right)}
\right]  + \log Z(s_t)\right] 
\end{align}
where the partition function $ Z(s_t)=\mathbb{E}_{a_t\sim\pi_{\text{ref}(\cdot| s_t)}}\left[ \exp{\left(\frac {r_{\phi}(s_t, a_t)}{\beta f(\hat{r}(s_t, a_t))}\right)}\right]$ is independent of $\pi_{\theta}$. Then we can define
$$\pi_{\theta_t} (a_t|s_t) = \frac {\pi_{\text{ref}}(a_t|s_t) \exp{\left(\frac {r_{\phi}(s_t, a_t)}{\beta f(\hat{r}(s_t, a_t))}\right)}}{Z(s_t)}, $$
which is a valid probability distribution of action $a_t$. Furthermore in \cref{tlppoa1}, since $Z(s_t)$ is independent of $\pi_{\theta}$, the optimal policy for the action $a_t$ at time step $t$ of the modified  token-level proximal policy optimization  in \cref{tlppo2} can be in the form of \cref{optp1}. 

By reorganizing \cref{optp1}, we can get  the token-level reward in \cref{optr2}. This completes the proof.
\end{proof}

\subsection{Proof of Bradley-Terry Model with Token-Level Reward Guidance in \cref{tbt}}  \label{appe4}
Let  
\begin{equation}
\label{phi1}
\begin{aligned}
 \varphi(\pi_{\theta}, f, \hat{r}; x, y_w, y_l) 
 = \sum_{t=0}^{T_w-1}  \beta f_w(\hat{r}([x, y_w^{<t}], y_w^t))\log\frac {\pi_{\theta} (y_w^t|[x, y_w^{<t}])}{\pi_{\text{ref}}(y_w^t|[x, y_w^{<t}])} 
 - \sum_{t=0}^{T_l-1}  \beta f_l(\hat{r}([x, y_l^{<t}], y_l^t))\log\frac {\pi_{\theta} (y_l^t|[x, y_l^{<t}])}{\pi_{\text{ref}}(y_l^t|[x, y_l^{<t}])};
\end{aligned}
\end{equation}
\begin{equation}
\label{delta1}
\begin{aligned}
\delta(f, \hat{r}; x, y_w, y_l)  = \sum_{t=0}^{T_w-1}  \beta f_w(\hat{r}([x, y_w^{<t}], y_w^t)) \log Z([x, y_w^{<t}]) 
 - \sum_{t=0}^{T_l-1}  \beta f_l(\hat{r}([x, y_l^{<t}], y_l^t)) \log Z([x, y_l^{<t}]).
\end{aligned}
\end{equation}
By \cref{toptr11} and the choices of $f_w$ and $f_l$ in \cref{TRGDPO},
\begin{equation*}
\begin{aligned}
& r_{\phi}(x, y_w) \\
& = \sum_{t=0}^{T_w-1} r_{\phi}([x, y_w^{<t}], y_w^{t}) \\
& =  \sum_{t=0}^{T_w-1} \left[ \beta f_w(\hat{r}([x, y_w^{<t}], y_w^t))\log\frac {\pi_{\theta} (y_w^t|[x, y_w^{<t}])}{\pi_{\text{ref}}(y_w^t|[x, y_w^{<t}])} + \beta f_w(\hat{r}([x, y_w^{<t}], y_w^t)) \log Z([x, y_w^{<t}]\right] \\
& = \sum_{t=0}^{T_w-1} \beta f_w(\hat{r}([x, y_w^{<t}], y_w^t))\log\frac {\pi_{\theta} (y_w^t|[x, y_w^{<t}])}{\pi_{\text{ref}}(y_w^t|[x, y_w^{<t}])} + \sum_{t=0}^{T_w-1} \beta f_w(\hat{r}([x, y_w^{<t}], y_w^t)) \log Z([x, y_w^{<t}].
\end{aligned}
\end{equation*}
Similarly,
\begin{equation*}
\begin{aligned}
& r_{\phi}(x, y_l) \\
& = \sum_{t=0}^{T_l-1} r_{\phi}([x, y_l^{<t}], y_l^{t}) \\
& = \sum_{t=0}^{T_l-1} \beta f_l(\hat{r}([x, y_l^{<t}], y_l^t))\log\frac {\pi_{\theta} (y_l^t|[x, y_l^{<t}])}{\pi_{\text{ref}}(y_l^t|[x, y_l^{<t}])} + \sum_{t=0}^{T_l-1} \beta f_l(\hat{r}([x, y_l^{<t}], y_l^t)) \log Z([x, y_l^{<t}].
\end{aligned}
\end{equation*}
In the above two equations, $T_w$ and $T_l$ are the lengths of $y_w$ and $y_l$ respectively. Thus using the notations in \cref{phi1,delta1} we get
\begin{equation*}
\begin{aligned}
 r_{\phi}(x, y_w)-r_{\phi}(x, y_l) = \varphi(\pi_{\theta}, f, \hat{r}; x, y_w, y_l) + \delta(f, \hat{r}; x, y_w, y_l).
\end{aligned}
\end{equation*}

Then the Bradley-Terry model with the token-level reward guidance is
\begin{equation}
\label{tbt1}
\begin{aligned}
& \Pr(y_w \succ  y_l | x)  \\
& = \frac{\exp{(r_{\phi}(x, y_w))}}{\exp{(r_{\phi}(x, y_w))} + \exp{(r_{\phi}(x, y_l))}} \\
& = \frac 1{1+ \exp{(r_{\phi}(x, y_l)- r_{\phi}(x, y_w))}}  \\
& = \sigma( r_{\phi}(x, y_w)- r_{\phi}(x, y_l)) \\
& = \sigma\left( \varphi(\pi_{\theta}, f, \hat{r}; x, y_w, y_l) + \delta(f, \hat{r}; x, y_w, y_l) \right). 
\end{aligned}
\end{equation}

\subsection{Proof of \cref{equiv}} \label{appe4-1}
\begin{equation}
\label{tbt21}
\begin{aligned}
 \Pr(y_w \succ  y_l | x)  = \sigma\left( \varphi(\pi_{\theta}, f, \hat{r}; x, y_w, y_l) + \delta( f, \hat{r}; x, y_w, y_l) \right),\\
\end{aligned}
\end{equation}
in which  $\delta( f, \hat{r}; x, y_w, y_l)$ does not depend on the policy $\pi_{\theta}$ to be optimized, but only on $f, \hat{r}, x, y_w, y_l$ and the partition function $Z(s_t)$  (also does not depend on $\pi_{\theta}$, please see \cref{thm3} in the main text). Since $\sigma(t)$ is the sigmoid function which is a strictly increasing function of $t$, we have:

\begin{theorem}
\label{B.3}
The function
in \Cref{tbt21} has the same maxima and the same ascent directions as the function $\sigma\left( \varphi(\pi_{\theta}, f, \hat{r}; x, y_w, y_l)\right)$. Moreover, for two policies $\pi_{\theta_1}$ and $\pi_{\theta_2}$, 
\begin{equation}
\label{ineq1}
\sigma\left( \varphi(\pi_{\theta_1}, f, \hat{r}; x, y_w, y_l)\right) > \sigma\left( \varphi(\pi_{\theta_2}, f, \hat{r}; x, y_w, y_l)\right)
\end{equation}
if and only if 
\begin{equation}
\label{ineq2}
\begin{aligned}
& \sigma\left( \varphi(\pi_{\theta_1}, f, \hat{r}; x, y_w, y_l) + \delta( f, \hat{r}; x, y_w, y_l) \right) \\
& >  \sigma\left( \varphi(\pi_{\theta_2}, f, \hat{r}; x, y_w, y_l) + \delta( f, \hat{r}; x, y_w, y_l) \right). 
\end{aligned}
\end{equation}

\end{theorem}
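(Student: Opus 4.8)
The plan is to reduce everything to two elementary facts. The first is that $\delta(f,\hat r;x,y_w,y_l)$ does not depend on the optimization variable $\pi_\theta$: it is assembled from $\beta$, the shaping functions $f_w,f_l$, the fixed reward model $\hat r$, the preference triple $(x,y_w,y_l)$, and the partition functions $Z([x,y^{<t}])$, and by \cref{thm3} each such $Z$ is an expectation under $\pi_{\text{ref}}$ alone. The second is that the sigmoid $\sigma$ is a smooth strictly increasing bijection of $\mathbb{R}$ onto $(0,1)$ with $\sigma'(u)>0$ for every $u$. Throughout I would abbreviate $c:=\delta(f,\hat r;x,y_w,y_l)$ and, for a policy $\pi_{\theta_i}$, set $u_i:=\varphi(\pi_{\theta_i},f,\hat r;x,y_w,y_l)$.

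I would first settle the ``if and only if'' between \cref{ineq1} and \cref{ineq2}. Since $\sigma$ is strictly increasing, $\sigma(a)>\sigma(b)\Leftrightarrow a>b$ for all reals $a,b$; chaining this with the trivial fact that adding the constant $c$ to both sides preserves the order of reals gives
\[
\sigma(u_1)>\sigma(u_2)\;\Longleftrightarrow\; u_1>u_2\;\Longleftrightarrow\; u_1+c>u_2+c\;\Longleftrightarrow\; \sigma(u_1+c)>\sigma(u_2+c),
\]
which is exactly the claimed equivalence.

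For the statement about maxima, fix $x,y_w,y_l$ and abbreviate $\varphi(\pi_\theta):=\varphi(\pi_\theta,f,\hat r;x,y_w,y_l)$. Then $\pi_\theta\mapsto\sigma(\varphi(\pi_\theta)+c)$ is the composition of $\pi_\theta\mapsto\varphi(\pi_\theta)$ with the strictly increasing scalar map $t\mapsto\sigma(t+c)$; since post-composition with a strictly increasing function does not change the set of maximizers, $\sigma(\varphi(\pi_\theta)+\delta)$, $\varphi(\pi_\theta)$, and $\sigma(\varphi(\pi_\theta))$ (the last being the $c=0$ case) all have the same set of maximizers in $\pi_\theta$. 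For the ascent directions I would differentiate with the chain rule:
\[
\nabla_\theta\,\sigma(\varphi(\pi_\theta)+\delta)=\sigma'(\varphi(\pi_\theta)+\delta)\,\nabla_\theta\varphi(\pi_\theta),\qquad \nabla_\theta\,\sigma(\varphi(\pi_\theta))=\sigma'(\varphi(\pi_\theta))\,\nabla_\theta\varphi(\pi_\theta),
\]
and, since $\sigma'>0$ everywhere, both gradients are strictly positive scalar multiples of the common vector $\nabla_\theta\varphi(\pi_\theta)$. Hence a direction $d$ satisfies $\langle\nabla_\theta\,\sigma(\varphi(\pi_\theta)+\delta),d\rangle>0$ iff $\langle\nabla_\theta\varphi(\pi_\theta),d\rangle>0$ iff $\langle\nabla_\theta\,\sigma(\varphi(\pi_\theta)),d\rangle>0$, so $\sigma(\varphi(\pi_\theta)+\delta)$ and $\sigma(\varphi(\pi_\theta))$ have identical ascent directions at every $\pi_\theta$.

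Every step is a one-line consequence of monotonicity of $\sigma$ together with $\delta$ being $\pi_\theta$-free, so I do not anticipate a genuine obstacle. The only point that demands care — and the one place where a loose argument would break down — is the independence of $\delta$ from $\pi_\theta$: this must be justified by appealing to \cref{thm3}, since if any $Z([x,y^{<t}])$ secretly carried a dependence on $\pi_\theta$, then $c$ would not be constant and neither the order-preservation chain nor the chain-rule computation above would go through.
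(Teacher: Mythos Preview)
Your proposal is correct and follows essentially the same route as the paper: the order-preservation equivalence is obtained via strict monotonicity of $\sigma$ and the $\pi_\theta$-independence of $\delta$, and the ascent-direction claim is obtained via the chain rule together with $\sigma'>0$. The only cosmetic difference is that for the ``same maxima'' part you invoke the general principle that post-composition with a strictly increasing map preserves maximizers, whereas the paper's Theorem~B.3 proof argues via vanishing gradients (its companion Theorem~B.4 actually uses your direct monotonicity argument), so there is no substantive divergence.
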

\begin{proof}
Note that,  $\delta( f, \hat{r}; x, y_w, y_l)$ is not dependent on the policy $\pi_{\theta}$, and for the sigmoid function $\sigma(t)$, it holds that $\sigma'(t)>0$ for all $t$. Then, by the definition, $d$ is an ascent direction of the function (\ref{tbt21}) if and only if 
$$ d^T \nabla_{\pi_{\theta}} \sigma\left( \varphi(\pi_{\theta}, f, \hat{r}; x, y_w, y_l) + \delta( f, \hat{r}; x, y_w, y_l) \right) >0,$$
which is equivalent to
\begin{equation*}
\begin{aligned}
&  d^T \sigma'\left( \varphi(\pi_{\theta}, f, \hat{r}; x, y_w, y_l)+ \delta( f, \hat{r}; x, y_w, y_l)\right) \nabla_{\pi_{\theta}}  \varphi(\pi_{\theta}, f, \hat{r}; x, y_w, y_l) >0 \\
& \Longleftrightarrow d^T \nabla_{\pi_{\theta}}  \varphi(\pi_{\theta}, f, \hat{r}; x, y_w, y_l)>0 \\
& \Longleftrightarrow  d^T \sigma'\left( \varphi(\pi_{\theta}, f, \hat{r}; x, y_w, y_l)\right) \nabla_{\pi_{\theta}}  \varphi(\pi_{\theta}, f, \hat{r}; x, y_w, y_l) >0 \\
& \Longleftrightarrow d^T \nabla_{\pi_{\theta}}  \sigma\left( \varphi(\pi_{\theta}, f, \hat{r}; x, y_w, y_l)\right)
>0.
\end{aligned}
\end{equation*}
Hence the function (\ref{tbt21}) has the same ascent directions as the function  $\sigma\left( \varphi(\pi_{\theta}, f, \hat{r}; x, y_w, y_l)\right)$.
Similarly, 
\begin{equation*}
\begin{aligned}
& \nabla_{\pi_{\theta}} \sigma\left( \varphi(\pi_{\theta}, f, \hat{r}; x, y_w, y_l) + \delta( f, \hat{r}; x, y_w, y_l) \right) =0  \\
& \Longleftrightarrow  \sigma'\left( \varphi(\pi_{\theta}, f, \hat{r}; x, y_w, y_l)+ \delta( f, \hat{r}; x, y_w, y_l)\right) \nabla_{\pi_{\theta}}  \varphi(\pi_{\theta}, f, \hat{r}; x, y_w, y_l) = 0 \\
& \Longleftrightarrow \nabla_{\pi_{\theta}}  \varphi(\pi_{\theta}, f, \hat{r}; x, y_w, y_l) = 0 \\
& \Longleftrightarrow  \sigma'\left( \varphi(\pi_{\theta}, f, \hat{r}; x, y_w, y_l)\right) \nabla_{\pi_{\theta}}  \varphi(\pi_{\theta}, f, \hat{r}; x, y_w, y_l) = 0 \\
& \Longleftrightarrow \nabla_{\pi_{\theta}}  \sigma\left( \varphi(\pi_{\theta}, f, \hat{r}; x, y_w, y_l)\right)
= 0.
\end{aligned}
\end{equation*}
Thus, the function in \Cref{tbt21} has the same maxima and the same ascent directions as the function $\sigma\left( \varphi(\pi_{\theta}, f, \hat{r}; x, y_w, y_l)\right)$. 

Next, since $\sigma(t)$ is strictly increasing, for inequality \cref{ineq1} we have
\begin{equation*}
\begin{aligned}
& \sigma\left( \varphi(\pi_{\theta_1}, f, \hat{r}; x, y_w, y_l)\right) > \sigma\left( \varphi(\pi_{\theta_2}, f, \hat{r}; x, y_w, y_l)\right) \\
&  \Longleftrightarrow  \varphi(\pi_{\theta_1}, f, \hat{r}; x, y_w, y_l) >  \varphi(\pi_{\theta_2}, f, \hat{r}; x, y_w, y_l) \\
& \Longleftrightarrow  \varphi(\pi_{\theta_1}, f, \hat{r}; x, y_w, y_l)  + \delta( f, \hat{r}; x, y_w, y_l) >  \varphi(\pi_{\theta_2}, f, \hat{r}; x, y_w, y_l)  + \delta( f, \hat{r}; x, y_w, y_l) \\
& \Longleftrightarrow  \sigma\left(\varphi(\pi_{\theta_1}, f, \hat{r}; x, y_w, y_l)  + \delta( f, \hat{r}; x, y_w, y_l)\right) > \sigma\left(\varphi(\pi_{\theta_2}, f, \hat{r}; x, y_w, y_l)  + \delta( f, \hat{r}; x, y_w, y_l)\right). 
\end{aligned}
\end{equation*}
\end{proof}

For easy understanding of \cref{B.3}, we simplify in the sequel all notations independent of $\pi_{\theta}$ to be optimized, then the Bradley-Terry preference model in  \Cref{tbt21}  is $ \Pr(y_w \succ  y_l | x) = \sigma( \varphi(\pi_{\theta}) + \delta)$, and \cref{B.3} is exactly as:

\begin{theorem}
\label{B.4}
For the policy $\pi_{\theta}$, the function $\sigma( \varphi(\pi_{\theta}) + \delta)$ has the same maxima and ascent directions as the function $\sigma( \varphi(\pi_{\theta}) )$, here $\sigma(t)$ is the sigmoid function. 
\end{theorem}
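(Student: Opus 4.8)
The plan is to reduce the entire statement to two elementary facts about the sigmoid $\sigma$: it is strictly increasing on $\mathbb{R}$, and it is differentiable with $\sigma'(t)>0$ for every $t$. The structural point that makes these facts applicable is that $\delta=\delta(f,\hat{r};x,y_w,y_l)$ is a fixed number, independent of the policy $\pi_{\theta}$ being optimized --- this is exactly the content of \cref{thm3}, since the partition functions $Z([x,y^{<t}])$ depend only on $\pi_{\text{ref}}$ and $\hat{r}$. Hence adding $\delta$ inside $\sigma$ is merely a fixed horizontal shift of the argument, which a strictly increasing function cannot reorder.

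\textbf{Same maxima.} I would argue order-theoretically. For any real-valued objective $g(\pi_{\theta})$, strict monotonicity of $\sigma$ yields $\argmax_{\pi_{\theta}}\sigma(g(\pi_{\theta}))=\argmax_{\pi_{\theta}}g(\pi_{\theta})$ as an identity of sets (both sides possibly empty), together with $\sup_{\pi_{\theta}}\sigma(g(\pi_{\theta}))=\sigma\bigl(\sup_{\pi_{\theta}}g(\pi_{\theta})\bigr)$. Applying this once with $g(\pi_{\theta})=\varphi(\pi_{\theta})+\delta$ and once with $g(\pi_{\theta})=\varphi(\pi_{\theta})$, and using that a constant additive shift does not move the argmax of $\varphi$, I conclude that $\sigma(\varphi(\pi_{\theta})+\delta)$ and $\sigma(\varphi(\pi_{\theta}))$ have precisely the same set of maximizers.

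\textbf{Same ascent directions.} I would apply the chain rule at an arbitrary $\pi_{\theta}$: $\nabla_{\pi_{\theta}}\sigma\bigl(\varphi(\pi_{\theta})+\delta\bigr)=\sigma'\bigl(\varphi(\pi_{\theta})+\delta\bigr)\,\nabla_{\pi_{\theta}}\varphi(\pi_{\theta})$, and likewise $\nabla_{\pi_{\theta}}\sigma\bigl(\varphi(\pi_{\theta})\bigr)=\sigma'\bigl(\varphi(\pi_{\theta})\bigr)\,\nabla_{\pi_{\theta}}\varphi(\pi_{\theta})$. Both gradients are the common vector $\nabla_{\pi_{\theta}}\varphi(\pi_{\theta})$ scaled by the strictly positive scalars $\sigma'(\varphi(\pi_{\theta})+\delta)$ and $\sigma'(\varphi(\pi_{\theta}))$, respectively. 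Since a direction $d$ is an ascent direction of a differentiable function $h$ exactly when $d^{\top}\nabla h>0$, and rescaling $\nabla h$ by a positive factor leaves the sign of that inner product unchanged, the two composed functions have identical ascent directions (and identical stationary points, which re-derives the maxima claim via the first-order condition). The accompanying strict-ordering equivalences stated in the theorem follow by the same one-liner: strict monotonicity of $\sigma$ converts each inequality between $\sigma$-values into the corresponding inequality between the $\varphi$-values, and adding the fixed constant $\delta$ to both sides is inert.

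\textbf{Expected main obstacle.} There is no genuine difficulty here; the argument is chain rule plus monotonicity. The only two points I would take care to state explicitly are: (i) ``same maxima'' must be read as equality of argmax sets rather than as an assertion that a maximizer exists, since the composed objective takes values in the open interval $(0,1)$ and need not attain its supremum over an unconstrained $\pi_{\theta}$; and (ii) ``ascent direction'' is understood in the directional-derivative sense, so that the chain-rule identity applies verbatim and the positive factor $\sigma'$ is harmless. With these conventions fixed, the proof is immediate.
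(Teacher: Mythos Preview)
Your proposal is correct and follows essentially the same approach as the paper: strict monotonicity of $\sigma$ for the maxima claim, and the chain rule together with $\sigma'(t)>0$ for the ascent-direction claim. The paper phrases the maxima part via local neighborhoods (if $\pi^*_\theta$ is a local maximizer of $\sigma(\varphi+\delta)$ then strip off $\sigma$, cancel $\delta$, re-apply $\sigma$), whereas you phrase it globally via equality of argmax sets, but this is a cosmetic difference; your added caveats about possibly empty argmax sets and the directional-derivative reading of ``ascent direction'' are sensible and do not diverge from the paper's reasoning.
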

\begin{proof}
Note that the sigmoid function $\sigma(t)$ is strictly increasing, meaning that for any real numbers $a$ and $b$, $a \ge b$ if and only if $\sigma(a) \ge \sigma(b)$. Thus, if $\pi^*_{\theta}$ is a maximal solution of $\sigma( \varphi(\pi_{\theta}) + \delta )$, then by the definition, there exists a neighborhood $\mathcal{N}$ of $\pi^*_{\theta}$ such that $\forall\pi_{\theta}\in \mathcal{N}$,  $\sigma( \varphi(\pi^*_{\theta}) + \delta )\ge \sigma( \varphi(\pi_{\theta}) + \delta )$. So $\varphi(\pi^*_{\theta}) + \delta \ge \varphi(\pi_{\theta}) + \delta$, and 
$\varphi(\pi^*_{\theta})\ge \varphi(\pi_{\theta})$. This leads to
$\sigma(\varphi(\pi^*_{\theta}))  \ge \sigma(\varphi(\pi_{\theta}))$, meaning that $\pi^*_{\theta}$ is also a maximal solution of $\sigma( \varphi(\pi_{\theta}))$. The converse can be proved similarly.

\quad Next, $d$ is an ascent direction of the function  $\sigma( \varphi(\pi_{\theta}) + \delta) $ if and only if
$$ d^T \nabla_{\pi_{\theta}} \sigma( \varphi(\pi_{\theta}) + \delta) 
=  \sigma'( \varphi(\pi_{\theta})+ \delta) d^T  \nabla_{\pi_{\theta}}  \varphi(\pi_{\theta}) >0,$$
which is equivalent to
\begin{equation*}
\begin{aligned}
& d^T \nabla_{\pi_{\theta}}  \varphi(\pi_{\theta})>0 \\
& \Longleftrightarrow \sigma'( \varphi(\pi_{\theta}))  d^T \nabla_{\pi_{\theta}}  \varphi(\pi_{\theta}) >0 \\
& \Longleftrightarrow d^T \sigma'( \varphi(\pi_{\theta}))   \nabla_{\pi_{\theta}}  \varphi(\pi_{\theta}) >0 \\
& \Longleftrightarrow d^T \nabla_{\pi_{\theta}}  \sigma( \varphi(\pi_{\theta}))
>0.
\end{aligned}
\end{equation*}
Hence the function  $\sigma( \varphi(\pi_{\theta}) + \delta) $ has the same ascent directions as the function  $\sigma( \varphi(\pi_{\theta}))$ w.r.t. $\pi_{\theta}$.

Further, the sigmoid function is strictly increasing, it does not change the order of values.
\end{proof}

\section{More Experiment Results}
\label{mexp}

\begin{table*}[!t]
\centering
\small 
\caption{Experiment results on AlpacaEval 2~\cite{AlpacaEval}, Arena-Hard~\cite{arenahard2024}, and MT-Bench~\cite{mt-bench} benchmarks.}

\resizebox{0.9\textwidth}{!}{
\begin{tabular}{lcccccccc}
\toprule
\multirow{3}{*}{\textbf{Method}} & \multicolumn{4}{c}{\textbf{Llama3-8B-Instruct PairRM}} & \multicolumn{4}{c}{\textbf{Llama3-8B-Instruct ArmoRM}} \\ 
\cmidrule(lr){2-5}\cmidrule(lr){6-9}
& \multicolumn{1}{c}{\textbf{AlpacaEval 2}} & \multicolumn{1}{c}{\textbf{Arena-Hard}} & \multicolumn{2}{c}{\textbf{MT-Bench}} & \multicolumn{1}{c}{\textbf{AlpacaEval 2}} & \multicolumn{1}{c}{\textbf{Arena-Hard}} & \multicolumn{2}{c}{\textbf{MT-Bench}} \\
\cmidrule(lr){2-2}\cmidrule(lr){3-3} \cmidrule(lr){4-5} \cmidrule(lr){6-6}\cmidrule(lr){7-7}\cmidrule(lr){8-9} 
& {\scriptsize \bf Win Rate (\%)} & {\scriptsize \bf Win Rate (\%)} & {\scriptsize \bf Score} & {\scriptsize \bf Win Rate(\%)}  & {\scriptsize \bf Win Rate (\%)} & {\scriptsize \bf Win Rate (\%)} & {\scriptsize \bf Score} & {\scriptsize \bf Win Rate(\%)} \\
\midrule
SFT &  30.6 & 21.4 & 7.9 & 27.5 & 30.6 & 21.4 & 7.9 & 27.5  \\
\midrule
DPO & 41.7 & 30.4 & \textbf{8.0} & 37.5 & 40.8 & 36.2 & \textbf{8.2}  & \textbf{46.3}   \\
TDPO & 40.7 & 30.2 & \textbf{8.0} & 39.0 & 41.3 & 36.7 & 8.0& 42.5  \\
SimPO & 39.8 & 28.7 & 7.8 & 32.5 & 37.0 & 28.1 & 7.8 & 42.5  \\
\midrule
TGDPO & \textbf{43.9} & \textbf{34.3} & \textbf{8.0} & \textbf{41.9} & \textbf{42.5} & \textbf{40.5} & 7.9 & 45.0  \\
\midrule[.7pt]
\multirow{3}{*}{\textbf{Method}} & \multicolumn{4}{c}{\textbf{Llama3.2-3B-Instruct ArmoRM}} & \multicolumn{4}{c}{\textbf{Gemma2-2B-it ArmoRM}} \\ 
\cmidrule(lr){2-5}\cmidrule(lr){6-9}
& \multicolumn{1}{c}{\textbf{AlpacaEval 2}} & \multicolumn{1}{c}{\textbf{Arena-Hard}} & \multicolumn{2}{c}{\textbf{MT-Bench}} & \multicolumn{1}{c}{\textbf{AlpacaEval 2}} & \multicolumn{1}{c}{\textbf{Arena-Hard}} & \multicolumn{2}{c}{\textbf{MT-Bench}} \\
\cmidrule(lr){2-2}\cmidrule(lr){3-3} \cmidrule(lr){4-5} \cmidrule(lr){6-6}\cmidrule(lr){7-7}\cmidrule(lr){8-9} 
& {\scriptsize \bf Win Rate (\%)} & {\scriptsize \bf Win Rate (\%)} & {\scriptsize \bf Score} & {\scriptsize \bf Win Rate (\%)}  & {\scriptsize \bf Win Rate (\%)} & {\scriptsize \bf Win Rate (\%)} & {\scriptsize \bf Score} & {\scriptsize \bf Win Rate (\%)} \\
\midrule
SFT &  23.8 & 17.1 & 7.0 & 16.3 & 32.8 & 20.1 & 7.9 & 37.5  \\
 \midrule
DPO & 29.6 & 23.2 & 7.9 & 29.4 & 40.8 & 26.4 & 8.0 & 43.1   \\
TDPO & 30.3 & 23.1 & 7.8 & 30.0 & 41.5 & 27.0 & 8.0& 40.0  \\
SimPO & 26.2 & 22.6 & 7.4 & 15.7 & 34.8 & 21.1 & 7.8 & 40.0   \\
\midrule
TGDPO & \textbf{35.8} & \textbf{25.4} & \textbf{8.1} & \textbf{36.9} & \textbf{43.0}& \textbf{30.7} & \textbf{8.1} & \textbf{46.9}  \\
\bottomrule
\end{tabular}
}
\label{tab:main_res_with_tdpo}
\end{table*}

\subsection{Additional Baseline Comparison}
\label{exp:tdpo}
Below we supplement the result of TDPO \cite{zeng2024tokenleveldpo} as an additional baseline for the experiment in \cref{tab:main_res} of the main paper. The additional result is demonstrated in \cref{tab:main_res_with_tdpo}. It can be seen that the performance of TDPO is very close to that of DPO. Our TGDPO, on the other hand, outperforms DPO by a large margin. Our TGDPO aims to leverage an existing token-level reward to guide DPO training at the token level. Whereas, TDPO \cite{zeng2024tokenleveldpo} aims to enhance the regulation of KL-divergence by incorporating a forward KL-divergence for each token to the DPO objective. It is not guided by a token-level reward.

\begin{table}[!t]
\centering
\small 
\caption{Experiment results on SFT models on AlpacaEval 2~\cite{AlpacaEval}, Arena-Hard~\cite{arenahard2024}, and MT-Bench~\cite{mt-bench} benchmarks.}
\resizebox{0.70\textwidth}{!}{
\begin{tabular}{lcccc}
\toprule
\multirow{3}{*}{\textbf{Method}} & \multicolumn{4}{c}{\textbf{Llama3-8B-SFT-Mixture Ultrafeedback}} \\
\cmidrule(lr){2-5}
& \multicolumn{1}{c}{\textbf{AlpacaEval 2}} & \multicolumn{1}{c}{\textbf{Arena-Hard}} & \multicolumn{2}{c}{\textbf{MT-Bench}} \\
\cmidrule(lr){2-2}\cmidrule(lr){3-3} \cmidrule(lr){4-5}
& {\scriptsize \bf Win Rate (\%)} & {\scriptsize \bf Win Rate (\%)} & {\scriptsize \bf Score} & {\scriptsize \bf Win Rate(\%)} \\
\midrule
SFT   & 5.0 & 6.2 & 7.6 & 16.3 \\
\midrule
DPO   & 9.9 & 10.2 & \textbf{7.8} & 19.5 \\
TDPO   & 11.0 & 11.7 & 7.5 & 15.7 \\
SimPO & 16.4 & 21.4 & \textbf{7.8} & 27.5 \\
\midrule
TGDPO w/ DPO's token reward & 12.8 & 13.8 & 7.7 & 20.0 \\
TGDPO w/ SimPO's token reward & \textbf{26.9} & \textbf{25.3} & 7.6 & \textbf{31.9} \\
\bottomrule
\end{tabular}
}
\label{tab:sft-model}
\end{table}

\subsection{Experiment on SFT Models}
\label{exp:sft-model}
In this section, we conduct experiments starting from SFT models. Specifically, we use the open-source SFT model \href{https://huggingface.co/OpenRLHF/Llama-3-8b-sft-mixture}{Llama3-8B-SFT-Mixture} from OpenRLHF \cite{hu2024openrlhf}. Llama3-8B-SFT-Mixture is trained using diverse, high-quality, open-source datasets by SFT and has not been trained by RLHF. Following \cite{meng2024simpo}, we conduct preference optimization on the UltraFeedback dataset \cite{cui2024ultrafeedbackboostinglanguagemodels} using the SFT model as the starting point. 

The experiment results on SFT models on AlpacaEval 2 \cite{AlpacaEval}, Arena-Hard \cite{arenahard2024}, and MT-Bench \cite{mt-bench} are shown in \cref{tab:sft-model}. Our TGDPO can leverage the token-level rewards from DPO or SimPO and outperforms them correspondingly. Specifically, TGDPO using SimPO's token-level reward achieves much better performance than all baseline methods. It achieves win rate gains of 10.5 on AlpacaEval 2, 4.4 on MT-Bench, and 3.9 on Arena-Hard compared to best-performing baselines.

\section{More Discussions on Closely Related Work} 
\label{mrw}

Our work proposes a framework for incorporating  existing token-level rewards  explicitly into the loss function of DPO, to guide optimizing policy at a fine-grained level. This is a challenging task since DPO’s reward function is explicitly expressed through the policy being optimized. Especially, a key theoretical challenge in deriving the computable loss function in  \cref{TGDPO} is the elimination of the partition functions, which is addressed in \cref{equiv} or \cref{B.3} or \cref{B.4}.

\subsection{Closely Related Work} 

\begin{table}[h!]
\centering
\caption{ Per-instance losses of closely related direct optimization methods.}
\label{lcs}
\renewcommand{\arraystretch}{2}
\resizebox{1.0\textwidth}{!}{
\begin{tabular}{@{}ll@{}}
\toprule
\textbf{Method} & \textbf{Per-Instance Loss} \\
\midrule
\textbf{TDPO} \cite{zeng2024tokenleveldpo} & $\sigma( u(x, y_w, y_l) - {\color{red} \delta(x, y_w, y_l)})$, \\
& \qquad \qquad where $u(x, y_w, y_l) = \beta \log \frac{\pi_{\theta}(y_w|x)}{\pi_{\text{ref}}(y_w|x)} - \beta \log \frac{\pi_{\theta}(y_l|x)}{\pi_{\text{ref}}(y_l|x)}$,\\
& \qquad \qquad $\delta(x, y_w, y_l)) = \beta D_{\text{SeqKL}} (x, y_l; \pi_{\text{ref}}||\pi_{\theta}) - \beta D_{\text{SeqKL}} (x, y_w; \pi_{\text{ref}}||\pi_{\theta})$. \\
{\bf \citet{yang2024denserewardviewaligning}} & 
$\sigma\left(
  {\color{red} C \mathbb{E}_{t \sim \text{Cat}}(\{\gamma^t\})} 
    \left[
        \log \frac{\pi_\theta(a_t^w \mid s_t^w)}{\pi_{\text{ref}}(a_t^w \mid s_t^w)}  -  \log \frac{\pi_\theta(a_t^l \mid s_t^l)}{\pi_{\text{ref}}(a_t^l \mid s_t^l)}
    \right]\right)$ \\
\addlinespace
\textbf{D$^2$PO} \cite{shao2025earlier}
& 
$  \sigma \left( 
\sum_{t=0}^{T_w} {\color{red}\gamma^t \beta} \log \frac{
\pi_\theta(y_w^t \mid {x}, {y}_w^{<t})
}{
\pi_{\text{ref}}(y_w^t \mid {x}, {y}_w^{<t})
}
- \sum_{t=0}^{T_l} {\color{red}\gamma^t \beta} \log \frac{
\pi_\theta(y_l^t \mid {x}, {y}_l^{<t})
}{
\pi_{\text{ref}}(y_l^t \mid {x}, {y}_l^{<t})
}
\right)$ \\
\hline
\textbf{TGDPO} (ours) & $\sigma\left( \sum_{t=0}^{T_w-1}  {\color{red}\beta f_w(\hat{r}([x, y_w^{<t}], y_w^t))}\log\frac {\pi_{\theta} (y_w^t|[x, y_w^{<t}])}{\pi_{\text{ref}}(y_w^t|[x, y_w^{<t}])} \right. - \left.\sum_{t=0}^{T_l-1}  {\color{red} \beta f_l(\hat{r}([x, y_l^{<t}], y_l^t))}\log\frac {\pi_{\theta} (y_l^t|[x, y_l^{<t}])}{\pi_{\text{ref}}(y_l^t|[x, y_l^{<t}])}\right)$
\\
\bottomrule
\end{tabular}
}
\end{table}

Several direct preference optimization methods also perform in a token-level manner. We 
derive our modified token-level reward  beginning from \cref{ntlppo}, which is similar to those in \citet{zeng2024tokenleveldpo} and \citet{ yang2024denserewardviewaligning}. However, the obtained final per-instance losses are  different. These per-instance losses are listed in \cref{lcs} for comparisons. From the table, it is obvious that our TGDPO explicitly incorporates existing token-level rewards into the per-instance loss for guiding DPO. While, 
TDPO \cite{zeng2024tokenleveldpo} constrains each token with forward KL-divergence, and fine-tunes pre-trained LLMs from the token level to enhance the regulation of KL-divergence. Additionally, \citet{yang2024denserewardviewaligning} and D$^2$PO \cite{shao2025earlier}
focus on earlier tokens of sequential generation for their tasks, by posing temporal decay parameters.

Moreover, in the derivation of our TGDPO, the partition function $Z(\cdot)$ is not dependent on the policy to be optimized, and it can be eliminated from the loss function by using our developed  \cref{equiv} or \cref{B.3} or \cref{B.4}, which are new and powerful. While, in TDPO \cite{zeng2024tokenleveldpo} the partition function is kept in their loss function and is changed to the forward KL-divergence. \citet{yang2024denserewardviewaligning} managed to eliminate the partition function from their loss function using the lower bounding approach. The method in D$^2$PO \cite{shao2025earlier} does not involve a partition function, since it is derived from the KL-constrained RL objective under the maximum entropy RL setting.

\subsection{Recovering Several Direct Preference Optimization Methods}
\label{D.2}

In \cref{DPOTLR}, we mentioned that the loss function $\mathcal{L}_{\text{TGDPO}}(\pi_{\theta})$ in \cref{TGDPO} provides a framework of direct preference optimization with token-level reward guidance. 
With an appropriate choice of $f(\cdot)$, this framework can recover several known direct preference optimization methods. For example, if we take $f_w\equiv f_l \equiv 1$, then \cref{TGDPO}  is the loss function of DPO  \cite{NEURIPS2023_a85b405e}. In the following, we give some other examples. 
{\bf It must be noted that these known preference optimization methods have their respective motivations. We only want to demonstrate that our proposed framework is reasonable by recovering them here.}

Note that, our per-instance loss of $\mathcal{L}_{\text{TGDPO}}(\pi_{\theta})$ is
\begin{equation*}
\begin{aligned}
\mathcal{L}_{\text{TGDPO\_P}}(\pi_{\theta}) = \sigma\left( \sum_{t=0}^{T_w-1}  \beta f_w(\hat{r}([x, y_w^{<t}], y_w^t))\log\frac {\pi_{\theta} (y_w^t|[x, y_w^{<t}])}{\pi_{\text{ref}}(y_w^t|[x, y_w^{<t}])} - \sum_{t=0}^{T_l-1} \beta f_l(\hat{r}([x, y_l^{<t}], y_l^t))\log\frac {\pi_{\theta} (y_l^t|[x, y_l^{<t}])}{\pi_{\text{ref}}(y_l^t|[x, y_l^{<t}])}\right).
\end{aligned}
\end{equation*}

\begin{enumerate}
\item Recovering the per-instance loss of SimPO \cite{meng2024simpo}: 
    By setting $f_w(\hat{r}([x, y_w^{<t}], y_w^t))= \frac 1 {|y_w|}$ and
    $f_l(\hat{r}([x, y_l^{<t}], y_l^t)) = \frac 1 {|y_w|}$, we get
 \begin{equation*}
\begin{aligned}
    \mathcal{L}_{\text{TGDPO\_P}}(\pi_{\theta}) 
    & = \sigma\left( \sum_{t=0}^{T_w-1}  \frac{\beta}{|y_w|}\log\frac {\pi_{\theta} (y_w^t|[x, y_w^{<t}])}{\pi_{\text{ref}}(y_w^t|[x, y_w^{<t}])} - \sum_{t=0}^{T_l-1} \frac{\beta}{|y_l|}\log\frac {\pi_{\theta} (y_l^t|[x, y_l^{<t}])}{\pi_{\text{ref}}(y_l^t|[x, y_l^{<t}])}\right)  \\    
    & = \sigma\left(  \frac{\beta}{|y_w|} \sum_{t=0}^{T_w-1} \log\frac {\pi_{\theta} (y_w^t|[x, y_w^{<t}])}{\pi_{\text{ref}}(y_w^t|[x, y_w^{<t}])} - \frac{\beta}{|y_l|} \sum_{t=0}^{T_l-1} \log\frac {\pi_{\theta} (y_l^t|[x, y_l^{<t}])}{\pi_{\text{ref}}(y_l^t|[x, y_l^{<t}])}\right)  \\    
    & = \sigma \left( 
\frac{\beta}{|y_w|} \log \frac{\pi_\theta(y_w | x)}{\pi_{\text{ref}}(y_w | x) } - \frac{\beta}{|y_l|} \log \frac{\pi_\theta(y_l | x)}{\pi_{\text{ref}}(y_l | x) }\right)\\
    & = \sigma \left( 
\frac{\beta}{|y_w|} \log \pi_\theta(y_w | x) - \frac{\beta}{|y_l|} \log \pi_\theta(y_l | x) + \left(\frac{\beta}{|y_l|} \log \pi_{\text{ref}}(y_l | x) - \frac{\beta}{|y_w|} \log  \pi_{\text{ref}}(y_w | x) \right)\right).
 \end{aligned}
\end{equation*}
Furthermore, by \cref{equiv} or \cref{B.3} or \cref{B.4}, introducing a constant into the above function does not  change where the function is maximized. Hence we get
 $$
    \mathcal{L}_{\text{TGDPO\_P}}(\pi_{\theta}) = \sigma \left( 
\frac{\beta}{|y_w|} \log \pi_\theta(y_w | x) - \frac{\beta}{|y_l|} \log \pi_\theta(y_l | x) - \gamma\right),
    $$
which is exactly  the per-instance loss of SimPO.

\item Recovering the per-instance loss of R-DPO \cite{park-etal-2024-disentangling}: 
    By setting $f_w(\hat{r}([x, y_w^{<t}], y_w^t))= f_l(\hat{r}([x, y_l^{<t}], y_l^t)) \equiv 1$, we get
 \begin{equation*}
\begin{aligned}
\mathcal{L}_{\text{TGDPO\_P}}(\pi_{\theta}) & = \sigma\left( \sum_{t=0}^{T_w-1}  \beta \log\frac {\pi_{\theta} (y_w^t|[x, y_w^{<t}])}{\pi_{\text{ref}}(y_w^t|[x, y_w^{<t}])} - \sum_{t=0}^{T_l-1} \beta \log\frac {\pi_{\theta} (y_l^t|[x, y_l^{<t}])}{\pi_{\text{ref}}(y_l^t|[x, y_l^{<t}])}\right)\\
& = \sigma\left(\beta \log\frac {\pi_{\theta} (y_w|x)}{\pi_{\text{ref}}(y_w|x)} - \beta \log\frac {\pi_{\theta} (y_l|x)}{\pi_{\text{ref}}(y_l|x )}\right).
 \end{aligned}
\end{equation*}

Furthermore, since $\alpha |y_w| -\alpha |y_l|$ does not depend on the policy $\pi_{\theta}$,  by \cref{equiv} or \cref{B.3} or \cref{B.4}, introducing it into the above function does not  change where the function is maximized. Hence we get
$$
\mathcal{L}_{\text{TGDPO\_P}}(\pi_{\theta}) = \sigma\left( \beta \log\frac {\pi_{\theta} (y_w|x)}{\pi_{\text{ref}}(y_w|x)} - \beta \log\frac {\pi_{\theta} (y_l|x)}{\pi_{\text{ref}}(y_l|x )}  
+ ( \alpha |y_w| -\alpha |y_l|) \right).
$$
which is exactly  the per-instance loss of R-DPO.

\item Recovering the per-instance loss of D$^2$PO \cite{shao2025earlier}: 
 By setting $f_w(\hat{r}([x, y_w^{<t}], y_w^t))=  f_l(\hat{r}([x, y_l^{<t}], y_l^t)) = \gamma^t$, we immediately get
\begin{equation*}
\begin{aligned}
\mathcal{L}_{\text{TGDPO\_P}}(\pi_{\theta}) = \sigma\left( \sum_{t=0}^{T_w-1}  \beta \gamma^t\log\frac {\pi_{\theta} (y_w^t|[x, y_w^{<t}])}{\pi_{\text{ref}}(y_w^t|[x, y_w^{<t}])} - \sum_{t=0}^{T_l-1} \beta \gamma^t\log\frac {\pi_{\theta} (y_l^t|[x, y_l^{<t}])}{\pi_{\text{ref}}(y_l^t|[x, y_l^{<t}])}\right),
\end{aligned}
\end{equation*}
which is exactly  the per-instance loss of D$^2$PO. 
\end{enumerate}

\section{Implementation Details}

\subsection{Hyperparameter Settings} \label{hs}
Following \cite{meng2024simpo}, we use a consistent batch size of 128 and train all methods for 1 epoch in all settings. The AdamW optimizer \cite{adamw} is used. The max sequence length is set to be 2048 and a cosine learning rate schedule with 10\% warm-up steps is used. The hyperparameters for each method are grid-searched and are shown in \cref{tab:hyperparams_dpo} for DPO, \cref{tab:hyperparams_simpo} for SimPO, \cref{tab:hyperparams_ours} for our TGDPO correspondingly. TDPO in \cref{exp:tdpo} uses the same hyperparameters as DPO with an additional KL-penalty scale of 0.01. The training is conducted using 8 A100 GPUs.

\begin{table*}[h!]
\caption{The hyperparameters of DPO for each training setting.}
\label{tab:hyperparams_dpo}
\centering
\small
\begin{tabular}{lcc}
\toprule 
\textbf{Setting} &  $\beta$  &learning rate\\ \midrule
\textbf{Llama3-8B-Instruct PairRM} & 0.01  & 7e-7\\
\textbf{Llama3-8B-Instruct ArmoRM} & 0.01  & 7e-7\\
\textbf{Llama3.2-3B-Instruct ArmoRM } & 0.1 & 7e-7 \\
\textbf{Gemma2-2B-it ArmoRM} & 0.1 & 5e-7 \\
\textbf{Llama3-8B-SFT-Mixture Ultrafeedback} &0.1 &5e-7 \\
\bottomrule
\end{tabular}
\end{table*}

\begin{table*}[h!]
\caption{The hyperparameters of SimPO for each training setting.}
\label{tab:hyperparams_simpo}
\centering
\small
\begin{tabular}{lccc}
\toprule 
\textbf{Setting} &  $\beta$ & $\gamma$ &learning rate \\ \midrule
\textbf{Llama3-8B-Instruct PairRM} & 2.5 & 1.4 & 1e-6\\
\textbf{Llama3-8B-Instruct ArmoRM} & 10 & 3.0 & 1e-6\\
\textbf{Llama3.2-3B-Instruct ArmoRM } & 10 & 3.0 & 1e-6\\
\textbf{Gemma2-2B-it ArmoRM} & 20 & 2.0  & 5e-7\\
\textbf{Llama3-8B-SFT-Mixture Ultrafeedback} & 2.5&0.5 &5e-7  \\
\bottomrule
\end{tabular}
\end{table*}

\begin{table*}[h!]
\caption{The hyperparameters of TGDPO for each training setting.}
\label{tab:hyperparams_ours}
\centering
\small
\scalebox{0.95}{
\begin{tabular}{lccccc}
\toprule 
\textbf{Setting} & $\beta$ for  $\hat{r}(s_t, a_t)$ &$\gamma$  for  $\hat{r}(s_t, a_t)$& $\beta$ & $\alpha$  &learning rate \\ \midrule
\textbf{Llama3-8B-Instruct PairRM} & 0.01 &-& 0.1 & 0.5 & 7e-7\\
\textbf{Llama3-8B-Instruct ArmoRM} & 0.01 &-& 0.1 & 0.2  &7e-7\\
\textbf{Llama3.2-3B-Instruct ArmoRM } & 0.1 &-& 0.1 & 2.0 &7e-7\\
\textbf{Gemma2-2B-it ArmoRM} & 0.1 &-& 0.1 & 0.5& 5e-7\\
\textbf{Llama3-8B-SFT-Mixture Ultrafeedback w/ DPO's token reward} &0.1 &-& 0.1& 0.2&7e-7\\
\textbf{Llama3-8B-SFT-Mixture Ultrafeedback w/ SimPO's token reward} & 2.5&0.5& 0.01& 1.2&7e-7\\

\bottomrule
\end{tabular}}
\end{table*}

\subsection{Benchmark Details}
\label{benchmark_detail}

Following \cite{meng2024simpo}, we use a decoding temperature of 0.9 for the Llama models and a decoding temperature of 0.5 for the Gemma models for AlpacaEval 2. For Arena-Hard, we use the default greedy decoding for all models. We use the latest GPT-4o-2024-11-20 as the judge model for AlpacaEval 2 and Arena-Hard. We follow the official default configurations on MT-Bench.



\end{document}